\newcommand{\w}{\mathbf{w}}
\newcommand{\C}{\mathbf{C}}
\newcommand{\e}{\mathbf{e}}
\newcommand{\OO}{\mathbf{O}}
\newcommand{\x}{\mathbf{x}}
\newtheorem{Theorem}{Theorem}
\newtheorem{Lemma}{Lemma}
\newtheorem{Remark}{Remark}
\newtheorem{Proposition}{Proposition}
\newtheorem*{Definition}{Definition}
\begin{document}
\title{Diffusion Mechanism in Residual Neural Network: Theory and Applications}

\author{Tangjun~Wang, Zehao~Dou, Chenglong~Bao,
        and~Zuoqiang~Shi
\IEEEcompsocitemizethanks{
    \IEEEcompsocthanksitem T.Wang is with the Department of Mathematical Sciences, Tsinghua University, Beijing 100084, China. 
    \IEEEcompsocthanksitem Z. Dou is with the Department of Statistics and Data Science, Yale University. Work done during his undergraduate study at Peking University.
    \IEEEcompsocthanksitem C. Bao is with Yau Mathematical Sciences Center, Tsinghua University, Beijing 100084, China, and Yanqi Lake Beijing Institute of Mathematical Sciences and Applications, Beijing 101408, China. E-mail: clbao@mail.tsinghua.edu.cn
    \IEEEcompsocthanksitem Z. Shi is with Yau Mathematical Sciences Center, Tsinghua University, Beijing 100084, China, and Yanqi Lake Beijing Institute of Mathematical Sciences and Applications, Beijing 101408, China. E-mail: zqshi@tsinghua.edu.cn
\IEEEcompsocthanksitem Corresponding authors: C. Bao and Z. Shi.
}
}

\IEEEtitleabstractindextext{
	\begin{abstract}
		Diffusion, a fundamental internal mechanism emerging in many physical processes, describes the interaction among different objects. 
In many learning tasks with limited training samples, the diffusion connects the labeled and unlabeled data points and is a critical component for achieving high classification accuracy. 
Many existing deep learning approaches directly impose the fusion loss when training neural networks.
In this work, inspired by the convection-diffusion ordinary differential equations~(ODEs), we propose a novel diffusion residual network (Diff-ResNet), internally introduces diffusion into the architectures of neural networks. 
Under the structured data assumption, it is proved that the proposed diffusion block can increase the distance-diameter ratio that improves the separability of inter-class points and reduces the distance among local intra-class points. Moreover, this property can be easily adopted by the residual networks for constructing the separable hyperplanes.
Extensive experiments of synthetic binary classification, semi-supervised graph node classification and few-shot image classification in various datasets validate the effectiveness of the proposed method.


	\end{abstract}
	
	\begin{IEEEkeywords}
    Diffusion, residual neural network, ordinary differential equation, semi-supervised learning, few-shot learning
    \end{IEEEkeywords}
}

\maketitle

\IEEEraisesectionheading{\section{Introduction}}

\IEEEPARstart{R}{esNet} \cite{he2016deep} and its variants, containing skip connections among different layers, are promising network architectures in deep learning.
Compared to non-residual networks, ResNet significantly improve the training stability and the generalization accuracy. To understand the success of ResNet, 
a recent line of works build up its connection with ordinary differential equations (ODEs)~\cite{Weinan2017A,chen2018neural,haber2018stable}.  Let $x\in\mathbb{R}^n$ be a data point, the ODE model of a ResNet is
\begin{equation}
\label{eq:introduction}
\frac{dx(t)}{dt}=f(x(t),\theta(t)), \quad x(0) = x.
\end{equation}
where $f(x,\theta)$ is a map parametrized by $\theta$. It is straightforward that the forward Euler discretization of \eqref{eq:introduction} recovers the residual connection, which motivates the connections between ResNets and ODE. Based on the above observation, many recent works are proposed from two perspectives: the ODE inspired neural networks and the neural network based ODE. In concrete, the attempts for the ODE inspired neural networks can be classified into two directions. One approach for designing networks is to unroll the ODE system via different discretization schemes, 
which build up an end-to-end mapping. Typical networks include PolyNet~\cite{zhang2017polynet}, FractalNet~\cite{larsson2016fractalnet} and linear multi-step network~\cite{lu2018beyond}. 
The other approach is to add some new blocks into the current network architecture by the modification of the ODE model, e.g. noise injection~\cite{Gastaldi17ShakeShake}, 
stochastic dynamic system~\cite{huang2016deep}, adding a damping term~\cite{yang2020interpolation}. Due to the strong mathematical foundation of ODE, the network architectures proposed in the above works have shown the improved explainability and performance. On the other hand, the neural network based ODE model parametrizes the velocity $f$ by a neural network and finds the parameters $\theta$ via the optimal control formulation~\cite{lagaris1998artificial,dissanayake1994neural}. These methods improves the expressive ability of a traditional ODE method and exhibit promising results in various problems, including systems with irregular boundaries~\cite{lagaris2000neural,mcfall2009artificial},  PDEs in the field of fluid mechanics~\cite{baymani2010artificial}, and high-dimensional differential equations~\cite{han2018solving}. Thus,  connection between ResNets and ODE deserves deep exploration.

Success of deep learning methods highly depends on a large amount of training samples, but collecting training data requires intensive labor works, and is sometimes impossible in many application fields due to the privacy or safety issues. To alleviate the dependency of training data, semi-supervised learning (SSL)~\cite{zhu2009introduction,chapelle2009semi} and few-shot learning (FSL)~\cite{fei2006one, vinyals2016matching} have received great interests in recent years. Semi-supervised learning typically uses a large amount of unlabeled data, together with the labeled data, to construct better classifiers. Few-shot learning is a more recent paradigm which is closely related to semi-supervised learning, and the main difference lies in that the the size of support set (labeled points) is much smaller. One common feature in SSL and FSL is to make use of the unlabeled samples to address the limited labeled set issue. 
See \cite{chapelle2009semi, wang2020generalizing} for the review of SSL and FSL. In this work, we focus on the deep learning based approaches for solving SSL problems. In general, the deep SSL methods can fall into two categories: consistency regularization and entropy minimization~\cite{oliver2018realistic}. Consistency regularization demands that minor perturbation on the input does not change the output significantly. $\Pi$-Model~\cite{bachman2014learning, laine2016temporal} and its more stable version Mean Teacher~\cite{tarvainen2017mean} are based on this idea, which require the stochastic network predictions over different passes 
have little disturbance. VAT~\cite{miyato2018virtual} replaces stochastic perturbation with the "worst" perturbation which can most significantly affect the output of the prediction function. Entropy minimization, which is closely related to self-training, encourages more confident predictions on unlabeled data. EntMin~\cite{grandvalet2004semi} impose the low entropy requirements on the predictions of unlabeled examples. Pseudo label~\cite{lee2013pseudo} feeds unlabeled samples with high prediction confidence into the network as labeled ones to train better classifier. Besides, some holistic approaches try to unify the current effective methods in SSL in a single framework, e.g.\ MixMatch~\cite{berthelot2019mixmatch}, FixMatch~\cite{sohn2020fixmatch}. Despite the existence of many deep SSL methods that achieve impressive results in various tasks, the internal mechanism of the consistency regularization or entropy minimization methods remains unclear in SSL/FSL classification. 

To demystify this mystery in SSL and FSL, we propose an ODE inspired deep neural network that is based on the connection between ODE and ResNet. As shown in~\eqref{eq:introduction}, current ODE counterpart of ResNet is a convection equation. Each point governing by~\eqref{eq:introduction} is evolved independently. This evolution process is acceptable when a large amount of training samples are available, but the performance is significantly deteriorated as the number of supervised samples decreases. Thus, it may be problematic when directly applying~\eqref{eq:introduction} for SSL/FSL. To solve this problem, we introduce diffusion mechanism in~\eqref{eq:introduction}, leading to a convection-diffusion equation. After the discretization, we obtain a diffusion based residual neural network. The imposed diffusion to enforce the interactions among samples (include labled and unlabled) that is a key component in the regime of limited training data. In fact, it is worth mentioning that the convection and diffusion mechanisms always appear simultaneously in complex systems such as fluid dynamics~\cite{morton2019numerical}, building physics~\cite{svoboda2000convective}, semiconductors~\cite{markowich1989system}, which strongly motivates the integration of diffusion into deep ResNets. 

Imposing the interactions among samples is a classical idea and has appeared in many existing SSL approaches~\cite{zhu2003semi,nadler2009semi,shi2017weighted}, but the combination of convection and diffusion in the network architecture is underexplored. In addition, most methods introduce the diffusion by adding a Laplacian regularizer in the loss function, which is widely used in graph-based SSL~\cite{weston2012deep, yang2016revisiting}. In this case,  tuning the weight of the Laplacian regularizer is not an easy task and often sensitive to tasks. Different from the above methods, we explicitly add diffusion layers into the ResNet. The proposed diffusion layers internally impose the interactions among samples and have shown to be more effective in SSL/FSL. More importantly, we theoretically analyze the diffusive ODE and show its advantage in terms of distance-diameter ratio among data samples, which provides a solid foundation the proposed method. 
In summary, we list our main contributions as follows.
\begin{itemize}
    \item We propose a convection-diffusion ODE model for solving SSL/FSL, leading to the addition of diffusion layers into ResNets after proper discretization. The proposed diffusion based ResNet strengthen the relationships among labeled and unlabeled data points via a designed network architecture, rather than imposing the diffusion loss in the total. To the best of our knowledge, this is the first attempt that internally incorporates diffusion mechanism into deep neural network architecture.
    \item Under the structured data assumption \cite{li2018learning}, it is proved that diffusion mechanism is able to accelerate the classification process in the sense that samples from different subclasses can be driven apart, while samples from the same subclass will be brought together. Using such property, we can theoretically construct a residual network that ensures that output features are linearly separable. This analysis provides the mathematical foundation of our method.
    \item Extensive experiments on various tasks and datasets validate our theoretical results and the advantages of the proposed Diff-ResNet.   
\end{itemize}

The rest of this paper is organized as follows. The related work is given in Section
2. Section 3 presents the formulation and details of our diffusion residual network, and Section 4 provides theoretical the analysis of diffusion mechanism. Experimental results on various tasks are reported in Section 5. We conclude the paper in Section 6.


\section{Related Works}
\subsection{Diffusion Mechanism}
The idea of diffusion is widely used in various fields. In graph neural networks, \cite{klicpera2019diffusion} concludes a unified framework for graph diffusion, and proposes a preprocessing method that create a new graph based on diffusion. With spectral analysis of the new graph, they show that local clusters can be amplified while noise can be suppressed. Diffusion-Convolutional neural networks~\cite{atwood2016diffusion} learn diffusion-based representations from graph and use them as an effective basis for node classification. Diffusion is also used in diffusion map or eigenmap~\cite{coifman2005geometric,belkin2003laplacian}, which uses linear diffusion PDEs with closed
form solutions for dimension reduction. Different from linear dimensional reduction methods like principal component analysis (PCA), diffusion maps belongs to nonlinear methods that focus on the underlying manifold of data. It constructs a Markov chain based on diffusion process, which can capture the geometric structure of manifold at larger scales as the diffusion goes on. Diffusion is used by previous work to deal with data insufficiency. \cite{kushnir2020diffusion} diffuses the label information to propose an efficient criterion for switching between exploration and refinement in active learning. Recently, diffusion has been proposed to design new network architectures. DifNet~\cite{jiang2018difnet} constructs a diffusion process on a single image for semantic segmentation, and approximates the process by a cascade of random walks. \cite{wang2018resnets} also adds a diffusion term into ODE induced by ResNet, but its diffusion is in the Euclidean space while ours is in the embedded manifold. Graph Neural Diffusion~\cite{chamberlain2021grand} also uses parabolic diffusion-type PDEs to design GNNs, but they introduce additional attention parameters in each diffusion layer. Additionally, their final output is calculated directly from inputs by performing diffusion, while our diffusion layers are added before each residual block. Thus, we can integrate convection and diffusion in an intrinsic way, theoretically guarantee the improvement in classification accuracy. EPNet~\cite{rodriguez2020embedding} uses diffusion on both embedded features and labels to utilize the query set in few-shot learning. However, their method only contains diffusion, while ours combine convection and diffusion internally. To the best of our knowledge, this is the first work that applies the diffusion mechanism to ResNet with rigorous mathematical analysis.



\subsection{Neural ODEs}
The deep learning models and dynamical systems have closed relationship, which is firstly introduced in a proposal of E. et al.~\cite{Weinan2017A}. Using this connection, many works have been proposed for improving deep learning models. \cite{li2017maximum,li2018an} propose several training algorithms based on Pontragyn's Minimum Principle condition and successive approximation method. Neural ODE \cite{chen2018neural} treats ResNet as the forward Euler discretization of an ordinary differential equation and adopt adjoint method to train the ODE model, which inspires a long list of work considering the relationship between ordinary differential equations and deep residual networks. These papers interpret ResNets as a discretization of dynamical systems, where the dynamics at each step is a linear transformation followed by a non-linear activation function. \cite{haber2018stable} treats deep networks as a parameter estimation problem of nonlinear dynamical systems, and propose new forward propagation techniques that relieve exploding or vanishing gradients problem. \cite{lu2018beyond} provides a unified framework for interpreting ResNets and its derivatives, such as PolyNet~\cite{zhang2017polynet} and FractalNet~\cite{larsson2016fractalnet}. Based on the framework, the author proposes a linear multi-step architecture. However, most ODE inspired residual networks cannot be directly applied to the semi-supervised problems as they need many supervised samples.



\subsection{Graph-based semi-supervised learning}
Graph-based SSL algorithms have received much attention~\cite{zhu2003semi,zhu2009introduction} because graph structure can effectively encode the relationship among data points. Graph-based semi-supervised learning is based on the assumption that nearby nodes tend to have the same labels. In graph, each sample is denoted by a vertex, and the weighted edge measures the similarity between samples. \cite{zhu2003semi} initially proposes the Gaussian Fields and Harmonic Functions (GFHF) algorithm, which aims to minimize the graph Laplacian objective function with the constraint on labeled points. After that, \cite{zhou2004learning} introduces Local and Global Consistency (LGC) algorithm, which differs from GFHF model in that the label for each sample is penalized to ensure regularity, and the hard label constraint is turned into a soft constraint using Laplacian multiplier. Belkin et al.\cite{belkin2004semi,belkin2006manifold} proposes the manifold regularization framework, which employs a kernel-based regularization term. Such kernels are often derived from the graph Laplacian, which becomes a general extension of graph Laplacian regularization~\cite{ando2007learning, smola2003kernels}. Semi-supervised embedding~\cite{weston2012deep} extends the Laplacian regularizer from labels to network outputs, which imposes constraints on the parameters of a neural network. Recent explosion in SSL can be traced back to SimCLR~\cite{chen2020simple}, which provides a simple framework for contrastive learning of visual representations that can be used on semi-supervised tasks. SwAV~\cite{caron2020unsupervised} both clusters data and enforces consistency between cluster assignments produced for different views of the same image simultaneously to avoid the computation burden of a large number of explicit pairwise feature comparisons in contrastive learning. DINO~\cite{caron2021emerging} extends the self-supervised methods to transformer~\cite{dosovitskiy2021image} and further improves the performance on SSL benchmarks. These methods are based mostly on contrastive learning and self-supervised learning, and are not directly related to graph-based semi-supervised learning. Few-shot learning, which is a special scenario of semi-supervised learning, may also be integrated with graph by introducing a Gaussian kernel similarity matrix on the embedded features obtained through a pretrained backbone. \cite{satorras2018few} uses label propagation by building a similarity matrix on both support set and query set. wDAE-GNN~\cite{gidaris2019generating} introduces a denoising autoencoder that injects Gaussian noise on a set of classification weights as inputs and learns to reconstruct the target classification weights, in order to regularize the weight. Furthermore, they implement the denoising autoencoder as a graph neural network to capture the co-dependencies. EPNet~\cite{rodriguez2020embedding} uses both embedding propagation on features and label propagation on labels to utilize the information in unlabeled query set. EASE~\cite{zhu2022ease} proposes an assumption that embedded features can be drawn from multiple subspaces, and thus constructs the similarity matrix in a block-diagonal prior. Nonetheless, our paper embeds the Laplacian regularization intrinsically in the neural network structure through diffusion layers, which is different from methods that use iterative approach to minimize the loss function, or those that adds a regularization term based on graph Laplacian to the objective function and uses vanilla networks to optimize.


\begin{figure*}[t!]
	\centering
	\includegraphics[width=0.8\linewidth]{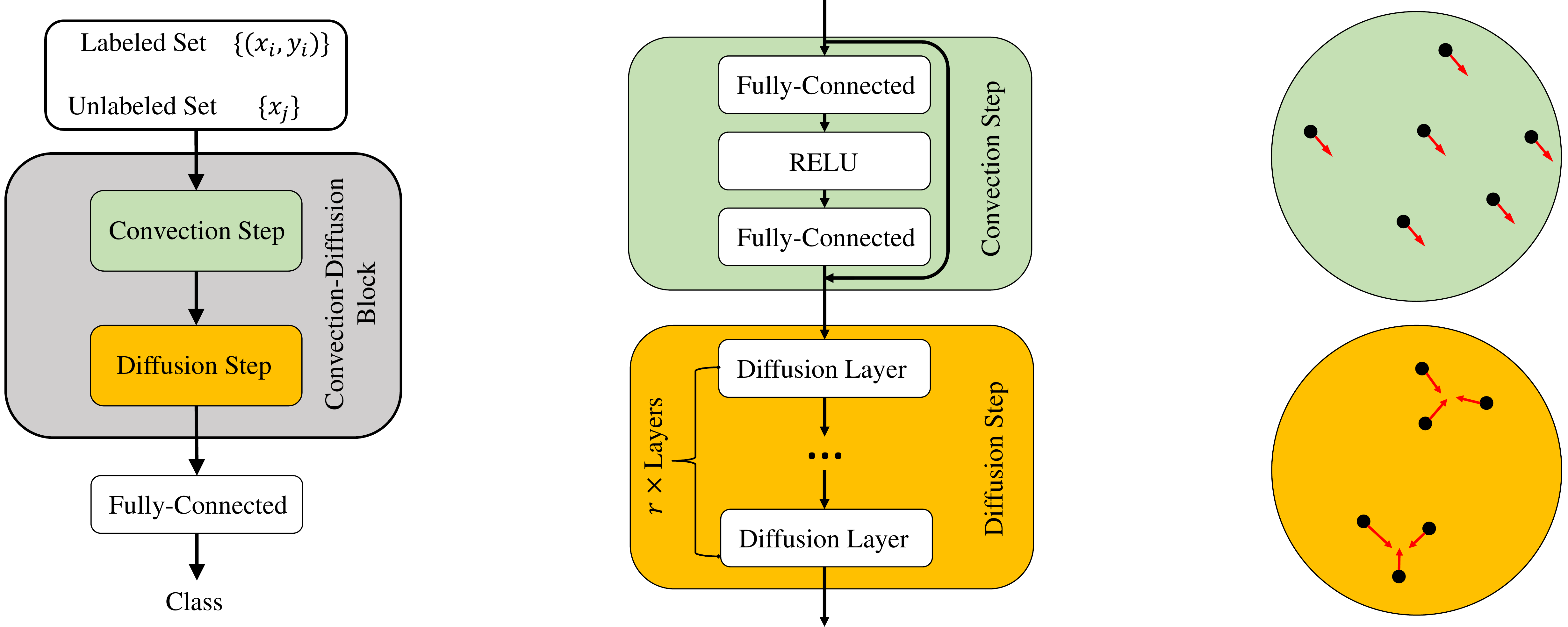}
	\caption{Illustration of our Diff-ResNet. Left: network structure; middle: details in each convection-diffusion block; right: movements of data points with convection or diffusion.}
	\label{fig:network}
\end{figure*}

\subsection{Few-shot learning}
To address the limited training samples problem, few-shot learning, a new learning paradigm~\cite{fei2006one,miller2000learning}, has been proposed and become an important topic in machine learning. The few-shot learning has been extensively explored in recent years, and there are many different kinds of methods. Among existing few-shot learning methods, embedding learning is a typical approach, which maps each sample to a low dimensional spaces such that similar samples are close while dissimilar samples are far away. The embedding method maps data samples to a feature space by training an embedding function from a large-scale dataset. In the feature space, another classifier is applied for classifying query data. Typical methods include Matching Network~\cite{vinyals2016matching}, which uses LSTM~\cite{hochreiter1997long} with attention mechanism and external memory to construct the embedding mappings for the query set and the support set. It firstly introduce the episode-based training method to match the training and testing condition. Prototypical Network~\cite{snell2017prototypical} compares the embedding of the query point with the prototype of each class and assign the query point to the same class of the nearest prototype. The work in \cite{sung2018learning} introduces Relation Network that concatenates features of training and test samples as the embedding, and feeds it to another CNN to output a similarity score. It also introduces a deep distance metric instead of hand-crafted metrics like cosine distance. TADAM~\cite{oreshkin2018tadam} changes the metric for different tasks in order to use the specific information of each task. It introduces new parameters to scale the gradient and fine-tune the output of each convolutional blocks. 
Recently, Wang et al.~\cite{wang2019simpleshot} shows that merely pretraining an embedding function on base classes along with nearest neighbor clustering in $l_2$ distance can achieve competitive results. This line of work avoid complicated training strategies and get back on simple yet effective manipulations on embedded features. Following that, Laplacian regularized clustering~\cite{ziko2020laplacian} adopts a regularizer based on graph Laplacian. \cite{liu2019prototype} rectifies the features to reduce the cross-class and intra-class bias, and use the rectified prototype to help clustering. Several works have been focusing on the distribution of embedded features, e.g. \cite{yang2021free,hu2021leveraging} assumes that the features should be drawn from a Gaussian distribution, and thus uses Tukey’s Transformation Ladder\cite{tukey1977exploratory}, also called Power Transform, to calibrate the distribution. The specific setting of current evaluation of few-shot learning performance, which draws \textit{exactly the same} number of query samples from each class, inspires the application of Sinkhorn algorithm from the field of optimal transport. \cite{huang2019few,hu2021leveraging} proposes Sinkhorn k-means, and \cite{zhu2022ease} extends the algorithm to a semi-supervised setting. However, we do not explicitly use the uniform distribution of query labels because we believe it is impractical in real-world tasks. Compared to the existing works, our method adopts similar embedding training procedure with different attached classifier that has good performance in various few-shot learning tasks. More importantly, under the suitable assumption, we establish a thorough theoretical analysis of the propose method.

\section{Diffusion Residual Networks}
In this section, we introduce the diffusion mechanism from the ODE perspective and present the Diff-ResNet based on the numerical scheme for the diffusive ODE.
\subsection{The ODE formulation}
In ResNet~\cite{he2016deep}, the feature map of a specific data point $x_i$ after the $k$-th residual block is defined as $x_i^k$. Residual connection means $x_i^k$ is added to $x_i^{k+1}$ via a skip identity link. If we gather convolutional layers, batch normalization layers and other layers together, and denote them as function $f$, each residual block can be written as 
\begin{equation}
\label{eq:residual_block}
x_i^{k+1} = x_i^k + f(x_i^k,\theta^k) ,
\end{equation}
where $\theta^k$ is the parameters of $k$-th block. From ODE perspective, $f$ can be seen as the velocity, while $x_i^k$ and $x_i^{k+1}$ can be treated as the start position and end position of $x_i$. Introducing a time step $\Delta t$ which can be absorbed in $f$, the ResNet can be seen as the forward Euler discretization of the following ODE model, which depicts the evolution of $x_i$:
\begin{equation}
\label{eq:ode_resnet}
\frac{dx_i(t)}{dt}=f(x_i(t),\theta(t)), \quad x_i(0) = x_i.
\end{equation}
Time forms a continuous analogy to the layer index, where each layer corresponds to an iteration of the evolution. This ODE contains only the convection term, and each point moves independently without collision. To enhance the interactions among data points especially for unlabeled samples, we introduce an additional diffusion term in~\eqref{eq:ode_resnet}, which leads to the following convection-diffusion ODE system:
\begin{equation}
\label{eq:ode_resnet_diffusion}
\frac{d x_i(t)}{dt}=f(x_i(t),\theta(t))-\gamma\sum_{j=1}^N w_{ij}(x_i(t)-x_j(t)),
\end{equation}
for all $i=1,2,\ldots,N$, where $N$ is the number of points, $\gamma>0$ is a parameter that controls the strength of diffusion and $w_{ij}\geq 0$ is the weight between $x_i$ and $x_j$. By designing a weight matrix that can depict the similarity between points, we can expect similar points are brought together, while dissimilar points are driven apart. In this paper, the convection term $f$ is set to be a simple 2-layer network with width $w$, i.e.\
\begin{equation}
	f(x(t),\theta(t))=\sum_{i=1}^{w}a_t^{(i)}\sigma(\w_t^{(i)}\cdot x(t)+b_t^{(i)}).
\end{equation}
Here $x(t)\in \mathbb{R}^d$, $b_{t}^{(i)}\in \mathbb{R}$, $a_{t}^{(i)},~\w_{t}^{(i)}\in \mathbb{R}^d$ and $f: \mathbb{R}^d \rightarrow \mathbb{R}^d$.
The activation function $\sigma(\cdot)$ is chosen to be ReLU. $\theta(t) = [\w_{t}^{(i)}, b_t^{(i)}, a_t^{(i)}]_{i=1}^w$ is the collection of network weights at time $t$. In the next section, we will derive a practical algorithm based on the new ODE equation~\eqref{eq:ode_resnet_diffusion}.

%

\subsection{Algorithm}
\label{section:algorithm}

We discretize the convection-diffusion equation \eqref{eq:ode_resnet_diffusion} using the classic Lie-Trotter splitting scheme \cite{geiser2009decomposition}. After absorbing the time step $\Delta t$ into $f$ and $\gamma$, it leads to
\begin{align}
x_{i}^{k+1/2} & =x_{i}^k+f(x_i^k, \theta^k),\label{eq:convection_step}\\
x_i^{k+1} & = x_{i}^{k+1/2} - \gamma\sum_{j=1}^{N}w_{ij}(x_i^{k+1/2}-x_j^{k+1/2}). \label{eq:diffusion_step}
\end{align}
The convection step~\eqref{eq:convection_step} is nearly identical to the residual block~\eqref{eq:residual_block}, only differs in the time step, which is not essential as the implementation is the same. The added diffusion step~\eqref{eq:diffusion_step} can be seen as the stabilization of the convection step~\eqref{eq:convection_step}. If the weight matrix is pre-computed, the diffusion step is parameter free, thus the proposed diffusion term can be easily combined with any existing networks or algorithms in a plug-and-play manner. To construct the weight matrix, we use the Gaussian kernel $k(x,y)=\exp(-\|x-y\|_2^2/\sigma^2)$ to measure the similarity between data points. $\sigma$ is a parameter to adjust the distribution of weight. Next, we introduce two operators, $\mathrm{Sparse}$ and $\mathrm{Normalize}$, and one hyperparameter, $n_{\mathrm{top}}$, to obtain a sparse and balanced weight matrix. $\mathrm{Sparse}$ is a truncation operator to make the weight matrix sparse. In each row, it keeps the largest $n_{\mathrm{top}}$ entries and truncate other entries to 0. $\mathrm{Normalize}$ symmetrically normalize the weight matrix. Once constructed, the weight matrix remains unchanged during the training process. 

Using the Lie-Trotter scheme, we get one diffusion step \eqref{eq:diffusion_step} after convection step \eqref{eq:convection_step}. However, in our implementation, there are often several diffusion steps followed by each convection step. The reason is that the diffusion term has strong numerical stiffness as proved in Appendix A. The step size $\gamma$ should be small enough to keep numerical stability when the simple explicit Euler discretization method is used. Consequently, in order to maintain certain diffusion strength, we will use simple forward Euler scheme to discretize the diffusion term. Moreover, even if the total strength is small, multiple diffusion layers also give slightly better results in experiments. Thus, in the networks, we add $r$ diffusion layers after each residual block, each with a fixed step size $\gamma$. The illustration of our Diff-ResNet can be found in Figure~\ref{fig:network}. We summarize our method in Algorithm \ref{alg}.

\begin{algorithm}[hbtp!]
	\begin{algorithmic}[1]
		\State {\bfseries Input:} Labeled data points $\{(x_i,y_i)\}_{i=1}^{N_1}$. Unlabeled data points $\{x_j\}_{j=1}^{N_2}$. Number of blocks $s$. Number of diffusion steps $r$. Step size $\gamma$.
		\State {\bfseries Output:} Trained network parameters $\{\theta^k\}$
		\item[]
		\State Construct weight matrix $\tilde W$ by $\tilde w_{ij}= \exp(-\|x_i-x_j\|_2^2/\sigma^2)$ for all $i,j\in[N]$
		\State $W= \mathrm{Normalize}(\mathrm{Sparse} (\tilde W, n_{\mathrm{top}}))$
		\While{epoch $\leqslant$ MAX\_ITER}
		\State $x_i^0=x_i(0)=x_i$
		\For{$k=0, 1, \cdots, s-1$}
		\State $x_{i}^{k+1/2} =x_{i}^k+f(x_i^k, \theta^k)$\Comment{Convection Step}
		\For{$m=0, 1, \cdots, r-1$}
		\State $x_i^{k+1/2}=x_{i}^{k+1/2} - \gamma \sum_{j=1}^{N_1+N_2} w_{ij}(x_i^{k+1/2}-x_j^{k+1/2})$ \Comment{Diffusion Step}
		\EndFor
		\State $x_i^{k+1} = x_{i}^{k+1/2}$
		\EndFor
		\State $x_i(1)=x_i^s$
		\State Feed $x_i(1)$ into a classification layer, compute loss function using $\{y_i\}_{i=1}^I$, back propagate, and update $\{\theta_k\}$ using gradient descent.
		\State epoch = epoch$+1$
		\EndWhile
	\end{algorithmic}
	\caption{Training algorithm for Diff-ResNet}
	\label{alg}
\end{algorithm}



\begin{Remark}
In diffusion step \eqref{eq:diffusion_step}, the feature map of the $i$-th data point depends on the feature map of all data points at previous layer, which is not realistic in tasks when the total number of data points is too large. In our implementation, we adopt the mini-batch training strategy. That is, the weights in each batch are sparsified and normalized correspondingly. 
\end{Remark}

\section{Analysis of Diffusion Mechanism}
\begin{figure}[hbtp!]
	\centering
	\includegraphics[width=0.7\linewidth]{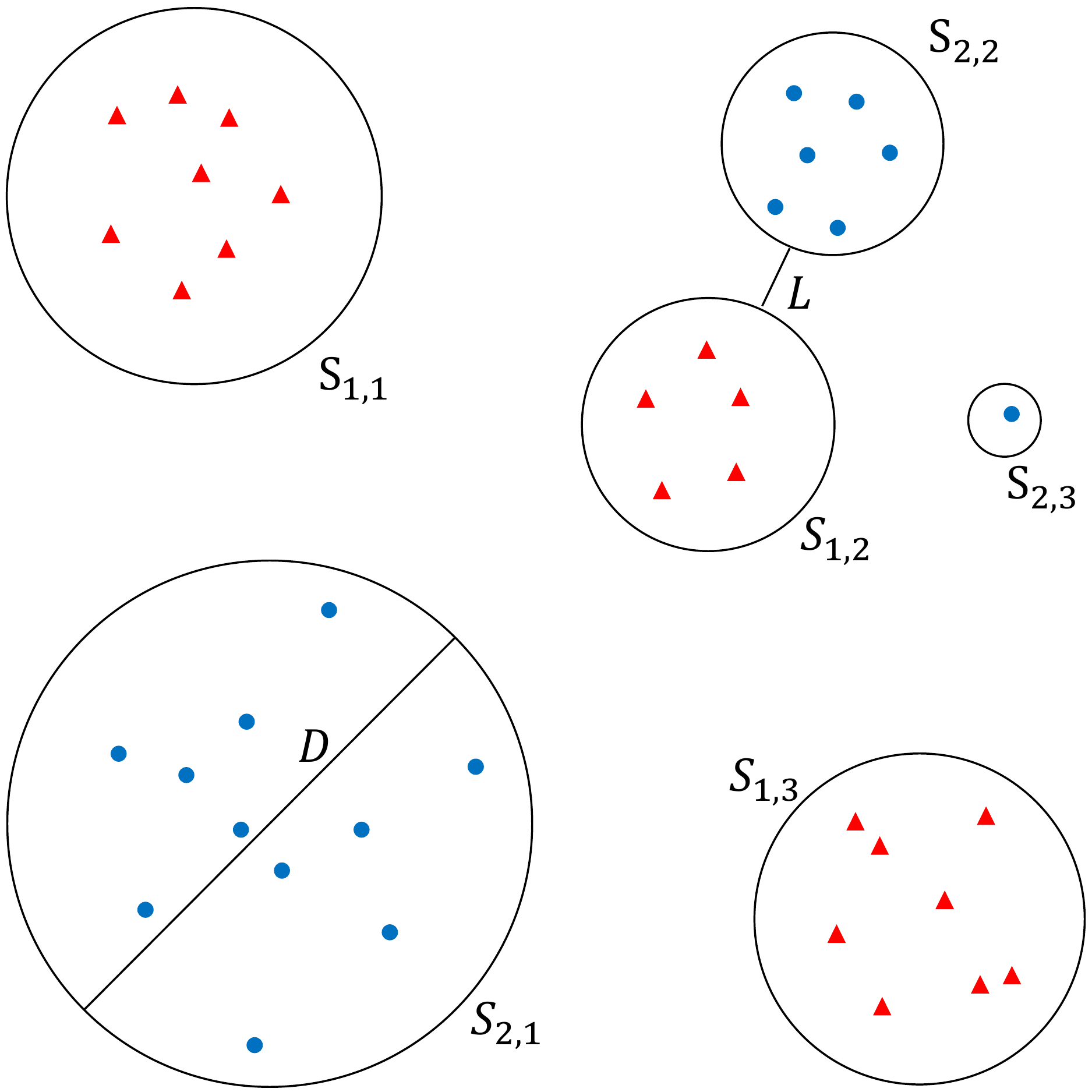}
	\caption{Illustration of Structured Data Assumption: $S_{i,j}$ stands for different subsets. $D$ is the upper bound of diameters of subclasses and $L$ is the lower bound of distances among subclasses.}
\end{figure}
In this section, the effectiveness of diffusion mechanism will be analyzed in theory. For the sake of simplicity, we only consider the binary classification problem and our analysis can be naturally extended to multi-class case.

\subsection{Structured Data Assumption}

Our dataset is generated as follows. Suppose all the data points come from $S=\coprod_{i=1}^k S_i$. Symbol $\coprod$ means that $S=\bigcup_{i=1}^k S_i$ and $S_{i_1}\bigcap S_{i_2}=\varnothing,~\forall i_1\neq i_2$. Each set $S_i$ contains the points from the $i$-th class. We further assume that each set $S_i$ can be divided into several non-overlap and bounded subsets $S_i=\coprod_{j=1}^l S_{i,j}$, each $S_{i,j}$ corresponding to a subclass. In the binary classification, $k=2$ and $S= S_1 \coprod S_2$. $l$ is the number of subclasses, which may vary with class. However, we can set $l$ to be the maximum across all classes, and let $S_{i,j} =\varnothing $ for those nonexistent subclasses. Denote $M=kl$ as the total number of subsets. The distance between two disjoint sets $A$,$B$ is defined as $$\mathrm{dist}(A,B)= \inf_{x\in A, y\in B} \|x-y\|^2.$$
The diameter of a set $A$ is defined as
\[\mathrm{diam}(A)= \sup_{x,y\in A}\|x-y\|^2.\]
\begin{Remark}
   The reason we introduce subclass instead of directly using class is that it can relieve our separability assumption. We do not need two classes to be well apart, which is not realistic in real-world scenario. Rather, we only need local subclasses to form clusters.
\end{Remark}
We are now ready to state the structured data assumption. 

\noindent(A) (Upper Bound of Diameters) There exists $D>0$ such that for each $(i,j)\in [k]\times[l]$, we have: $S_{i,j}\in B(x_{0},D/2)$\footnote{$B(x_{0},D/2)$ is defined as a ball centered at $x_0$ with radius $D/2$} for some $x_{0}$ , then:
\[\mathrm{diam}(S_{i,j}) \leqslant D.\]
\noindent(B) (Lower Bound of Distances) There exists $L>0$ such that for each $(i_1,j_1)\neq (i_2,j_2) \in [k]\times[l]$, we have:
\[\mathrm{dist}(S_{i_1,j_1}, S_{i_2,j_2}) \geqslant L.\]
Here, $L$ and $D$ are similar to inter-class distance and intra-class distance, which are terminologies widely used in the field of clustering. The difference lies in that the diameter used in our analysis is the upper bound for the points in the subclass, corresponding the local intra-class distance. In this sense, the Structured Data Assumption may be more practical for dealing with complex datasets. For example, in MNIST dataset, every digit number may have different handwriting styles, which corresponds to different subclasses and fits to our analysis framework. The intuition behind the Structured Data Assumption is simple: similar samples should be close while dissimilar samples should be far away.

\subsection{Theoretical Analysis}
\label{subsec:theoretical}
We present the theoretical results of this paper so as to explain the role of diffusion mechanism in binary classification. Due to the space limit, we defer all the proofs to the Appendix.

\begin{Definition}
A set $\{(x_{i},y_{i})\}_{i=1}^N,~x_{i}\in R^{d},~y_{i}\in [k]$ is called \emph{linear separable} if and only if there exists a hyperplane that cuts the full space $R^{d}$ into 2 half-spaces and data points in each half-space have a common corresponding label.
\end{Definition}

\begin{Theorem}
	\label{thm:1}
	(Approximation Property of ResNet Flow) If all the $S_{i,j},~(i,j)\in [k]\times[l]$ can be separated by a set of $M-1$ parallel hyperplanes, i.e., there exists a unique $S_{i,j}$ that lies in the region between each pair of adjacent parallel hyperplanes. Then we can construct the time-dependent parameters $\theta(t) = [\w_{t}^{(i)}, b_t^{(i)}, a_{t}^{(i)}]_{i=1}^w$ in the ResNet flow:
	\[f(x(t),\theta(t))=\sum_{i=1}^{w}a_{t}^{(i)}\sigma(\w_{t}^{(i)}\cdot x(t)+b_{t}^{(i)})\]
	such that all the final step\footnote{Time only forms a continuous analogy to the layer index, where each layer corresponds to forward propagation of the flow. Without loss of generality, we assume the final time step is $T=1$. } regions $F_{i}=\{x(1): x(0)\in S_{i,j},j\in[l]\}, i\in [k]$  are linear separable. We need $2M+O(d)$ different variables and $M/w$ layers.
\end{Theorem}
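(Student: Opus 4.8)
The plan is to first reduce the construction to a one‑dimensional profile and then realize that profile by a \emph{shear} flow in a direction transverse to the separating hyperplanes. Let $v$ be the common unit normal of the $M-1$ parallel separating hyperplanes, so that the scalar $\langle v,x\rangle$ already separates the subclasses: there are thresholds $c_1<\dots<c_{M-1}$ and a bijection $\tau:[M]\to\{(i,j)\}$ with $\langle v,S_{\tau(m)}\rangle$ contained in the open slab $(c_{m-1},c_m)$ (with $c_0=-\infty$, $c_M=+\infty$), and the class labels of $S_{\tau(1)},\dots,S_{\tau(M)}$ form some $\{1,2\}$‑sequence. Since the subclasses are bounded we may assume the separation is strict with a positive margin, so each $\langle v,S_{\tau(m)}\rangle$ lies in a closed interval $[\alpha_m,\beta_m]$ strictly inside $(c_{m-1},c_m)$. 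Fix any unit vector $u\perp v$ (available since $d\ge 2$; specifying the fixed vectors $v$ and $u$ is the source of the $O(d)$ overhead), and let $R$ be an upper bound for $|\langle u,x\rangle|$ over all data points, which is finite by boundedness.

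The key observation is that a velocity field of the form $f(x,t)=u\,\psi_t(\langle v,x\rangle)$ moves points only along $u$ and leaves $\langle v,x(t)\rangle$ invariant, because $\tfrac{d}{dt}\langle v,x(t)\rangle=\langle v,u\rangle\,\psi_t(\langle v,x\rangle)=0$. Consequently the induced flow is the explicit shear $x(1)=x(0)+\Psi(\langle v,x(0)\rangle)\,u$ with $\Psi:=\int_0^1\psi_t\,dt$. I would take $\Psi$ to be a piecewise‑affine function that is $\ge 2R+1$ on $[\alpha_m,\beta_m]$ whenever $S_{\tau(m)}$ has label $1$, is $\le-(2R+1)$ on $[\alpha_m,\beta_m]$ whenever it has label $2$, and interpolates affinely in between (the positive margins make this possible with a Lipschitz function). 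Then for $x(0)\in S_{i,j}$ we get $\langle u,x(1)\rangle\ge R+1>0$ if $i=1$ and $\langle u,x(1)\rangle\le-(R+1)<0$ if $i=2$, so $F_1$ and $F_2$ lie in the two open half‑spaces determined by the hyperplane $\{\langle u,x\rangle=0\}$, which is the desired linear separator. It is worth emphasizing that going transverse is essential: a one‑dimensional flow $\dot s=g(s,t)$ with $g$ Lipschitz in $s$ is order‑preserving, so it can never un‑interleave alternating labels along a line.

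To fit the construction into width $w$ and the stated depth, note that a profile $\Psi$ as above can be chosen with only about $M$ breakpoints (one turning point inside each $[\alpha_m,\beta_m]$ suffices, together with $O(1)$ endpoint kinks), hence $\Psi(s)=\text{affine}+\sum_{i=1}^{p}c_i\,\sigma(s-\gamma_i)$ with $p\approx M$. Splitting $\Psi=\sum_{a=1}^{q}\Psi_a$ into $q=\lceil p/w\rceil\approx M/w$ blocks of at most $w$ ReLU terms each, I would partition $[0,1]$ into $q$ equal subintervals and use, on the $a$‑th of them, the piecewise‑constant‑in‑$t$ field $f=q\,u\,\Psi_a(\langle v,x\rangle)$; because $\langle v,x(t)\rangle$ stays equal to $\langle v,x(0)\rangle$ throughout, the $u$‑increments contributed by the successive layers add up to exactly $\Psi(\langle v,x(0)\rangle)$. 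Each ReLU term uses the shared weight $\w_t^{(i)}=v$, the direction $a_t^{(i)}=q\,c_i\,u$, and a scalar bias $b_t^{(i)}=-\gamma_i$, so the non‑shared parameters number about $2M$ (the $c_i$'s and $\gamma_i$'s) and the layer count is $\approx M/w$, matching the statement.

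The only real obstacle is the conceptual one just highlighted: one must recognize that, since a ResNet flow is for each fixed time an order‑preserving homeomorphism, linear separability cannot be obtained by acting on the separating direction alone and must instead be engineered in a fresh direction by a shear. Once that idea is in hand, the rest — extracting the margins from the boundedness (and, if needed, the separation) assumptions, building the piecewise‑affine $\Psi$ with $O(M)$ ReLUs, checking that the time‑dependent field is globally Lipschitz in $x$ so the ODE is well posed, and the width/depth bookkeeping — is elementary.
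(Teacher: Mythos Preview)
Your proposal is correct and follows essentially the same approach as the paper: both construct a shear flow in a direction $u$ (paper: $\beta^*$) orthogonal to the separating normal $v$ (paper: $\w^*$), exploit the resulting invariance of $\langle v,x(t)\rangle$ to reduce the task to designing a one-variable piecewise-affine profile realized by $O(M)$ ReLU kinks, and then split those kinks into $\lceil M/w\rceil$ time-constant layers. The only cosmetic difference is that the paper presents the construction algorithmically---choosing biases $B_j$ in the gaps between the projected intervals and solving for scalars $\lambda_j$ sequentially so that each region lands in its target interval---whereas you build the profile $\Psi$ globally and then chunk it; the resulting parameters, layer count, and final separating hyperplane are the same.
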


We give a sketch of proof. Consider the simplest case in which each $S_{i,j}$ only contains one point and the width $w$ is also 1. Our main idea is to construct a ResNet flow such that each subclass is moved to a proper position with better separability. We split the total time into $N$ intervals and deal with points one by one. After solving the simplest case, we extend to case $w>1$, i.e., the network width is larger. Lastly, we prove the case when there are multiple points in each subclass $S_{i,j}$.

In the classical XOR dataset, the original data points $x_{i}(0) = x_{i}$ are not linear separable. However, Theorem \ref{thm:1} tells us that: through the ODE flow, we can make the output features $x_{i}(1)$ become linear separable, so that a proper fully-connected layer can achieve accurate classification. 

Our next step is to show that the condition in Theorem~\ref{thm:1} can be satisfied by introducing diffusion mechanism. First, we give a sufficient condition that is related to the Distance-Diameter ratio.
\begin{Theorem}
	\label{thm:2}
	If the Distance-Diameter Ratio is large enough:
	\[\frac{L}{D}\geqslant \frac{M(M-1)\sqrt{\pi}}{4}d,\]
	then all the $S_{i,j},~(i,j)\in [k]\times[l]$ can be separated by a set of $M-1$ parallel hyperplanes.
\end{Theorem}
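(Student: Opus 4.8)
The plan is to establish the statement by the probabilistic method: show that a single randomly chosen projection direction already arranges the $M$ subclasses into $M$ disjoint intervals on the line, and then take the $M-1$ requested hyperplanes perpendicular to that direction. The first step is therefore a reduction. I claim it suffices to produce a unit vector $v\in\mathbb{R}^{d}$ such that the $M$ projected sets $\langle v,S_{i,j}\rangle$, $(i,j)\in[k]\times[l]$, are contained in $M$ pairwise disjoint intervals. Indeed, each $S_{i,j}$ is bounded, so $\langle v,S_{i,j}\rangle$ sits inside a closed interval $\bar I_{i,j}$ of length at most $\mathrm{diam}(S_{i,j})\le D$ by Assumption~(A); if the $\bar I_{i,j}$ are pairwise disjoint we order them along $\mathbb{R}$, pick a threshold $t_{m}$ in each of the $M-1$ gaps between consecutive intervals, and set $H_{m}=\{x:\langle v,x\rangle=t_{m}\}$. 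These are $M-1$ parallel hyperplanes, and each of the $M$ regions into which they divide $\mathbb{R}^{d}$ contains exactly one $S_{i,j}$, which is exactly the asserted separation.

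To build such a $v$, I would draw it uniformly on the unit sphere $S^{d-1}$ and bound the probability that it fails for some pair. Fix a pair $P=S_{i_{1},j_{1}}\neq Q=S_{i_{2},j_{2}}$ and choose, once and for all, reference points $x^{*}\in P$, $y^{*}\in Q$; by Assumption~(B), $\|x^{*}-y^{*}\|\ge\mathrm{dist}(P,Q)\ge L$. If the projected intervals $\bar I_{P}$ and $\bar I_{Q}$ overlap, choose $t$ in the intersection: since each interval has length at most $D$ we get $|\langle v,x^{*}\rangle-t|\le D$ and $|\langle v,y^{*}\rangle-t|\le D$, hence $|\langle v,x^{*}-y^{*}\rangle|\le 2D$, i.e.\ $|\langle v,u_{PQ}\rangle|\le 2D/L$ where $u_{PQ}=(x^{*}-y^{*})/\|x^{*}-y^{*}\|$ is a fixed unit vector. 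So the failure event for the pair $(P,Q)$ is contained in $\{\,|\langle v,u_{PQ}\rangle|\le 2D/L\,\}$.

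It then remains to estimate $\Pr[\,|\langle v,u\rangle|\le\varepsilon\,]$ for a fixed unit vector $u$, with $\varepsilon=2D/L$; by rotational symmetry this probability does not depend on $u$. Using that a single coordinate of a uniform point on $S^{d-1}$ has density proportional to $(1-t^{2})^{(d-3)/2}$ on $[-1,1]$, one obtains $\Pr[\,|\langle v,u\rangle|\le\varepsilon\,]\le 2\varepsilon\cdot\frac{\Gamma(d/2)}{\sqrt{\pi}\,\Gamma((d-1)/2)}$, and the elementary estimate $\Gamma(d/2)/\Gamma((d-1)/2)\le \pi d/8$ (checked directly for small $d$ and for large $d$ from $\Gamma(d/2)/\Gamma((d-1)/2)\sim\sqrt{d/2}$) turns this into $\frac{\sqrt{\pi}\,d}{4}\varepsilon=\frac{\sqrt{\pi}\,d}{2}\cdot\frac{D}{L}$. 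A union bound over the $\binom{M}{2}=\tfrac12 M(M-1)$ pairs then bounds the total failure probability by $\tfrac12 M(M-1)\cdot\frac{\sqrt{\pi}\,d}{2}\cdot\frac{D}{L}=\frac{M(M-1)\sqrt{\pi}}{4}d\cdot\frac{D}{L}$, which is at most $1$ precisely when $\frac{L}{D}\ge\frac{M(M-1)\sqrt{\pi}}{4}d$. Under that hypothesis a separating direction $v$ exists, and combined with the reduction this proves the theorem.

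The argument is not deep, and the steps that need care rather than ingenuity are: the reduction, where Assumption~(A) is exactly what forces each projected set to sit in a length-$D$ interval; the spherical-slab probability together with the $\Gamma$-function bound, which is the only genuinely technical computation and which is precisely what yields the constant $\sqrt{\pi}\,d/4$; and the bookkeeping of whether the forbidden slab half-width should be $D/L$ or $2D/L$ (equivalently, whether one argues through the ball centers $c_{i,j}$ of Assumption~(A) or through reference points), so that the union bound closes at exactly the stated ratio. A deterministic variant is also available: each pair forbids a slab on $S^{d-1}$ of normalized surface area $O(d\,D/L)$, and one only needs the $\binom{M}{2}$ slabs not to cover the sphere, which gives the same arithmetic.
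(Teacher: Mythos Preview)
Your proposal is correct and is essentially the paper's own argument, phrased probabilistically rather than as a surface-area comparison (which you yourself note as the ``deterministic variant'' in your last paragraph). The only bookkeeping difference is that the paper works with the ball centers $O_{i}$ supplied by Assumption~(A), obtaining slab half-width $D/L$ and then using the cruder estimate $\Gamma(d/2)/\Gamma((d-1)/2)<d/2$ together with $\arcsin t\le \pi t/2$, whereas you take arbitrary reference points (half-width $2D/L$) and compensate with the tighter $\Gamma(d/2)/\Gamma((d-1)/2)\le \pi d/8$; the two choices cancel to produce exactly the same constant.
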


This proof relies on comparing the surface area of a specific set with the unit sphere. It is noted that $M$ is the number of subclasses that has $M \ll N$ in most cases. Thus the constant in the inequality is achievable. The next proposition shows that the diffusion step can increase this ratio with exponential rate.

The diffusion of each data point $x_i$ is modeled as
\[\frac{d x_i(t)}{dt}=-\gamma\sum_{j=1}^N w_{ij}(x_i(t)-x_j(t)), x_i(0)=x_i\]
for $i\in [N]$. Thus, all points change their positions subject to mutual interactions, and the distances between subsets and diameters of subsets are changed accordingly. Let $S_{i,j}(t)$ be the subset at time $t$, we define lower bound of distances $L(t)$ and upper bound of diameters $D(t)$ at time $t$ as
\[\mathrm{diam}(S_{i,j}(t)) \leqslant D(t),~\forall(i,j)\in [k]\times[l],\]
\[\mathrm{dist}(S_{i_1,j_1}(t), S_{i_2,j_2}(t)) \geqslant L(t),~\forall (i_1,j_1)\neq (i_2,j_2) \in [k]\times[l].\]
Let $G=(V,E)$ be a graph, where $V$ is the set of data points, and $E$ is the set of edges corresponding to non-zero weights $w_{ij}$. Then we have the following proposition describing the diffusion effects.
\begin{Proposition}
	\label{prop:1}
	 Suppose the data points in each subset $S_{i,j},~(i,j)\in [k]\times[l]$ form a connected component in the graph $G$, and each $S_{i,j}$ is convex. Then, the Distance-Diameter Ratio grows to infinity, i.e.\
	 \[\lim_{t\rightarrow \infty}\frac{L(t)}{D(t)}=\infty.\]
	 Moreover, the growth rate is exponential.
\end{Proposition}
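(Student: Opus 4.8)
The plan is to exploit the fact that the diffusion dynamics is a \emph{linear} ODE system driven by a graph Laplacian, so that its long-time behaviour is governed entirely by the spectrum of that Laplacian. Stacking the trajectories into a matrix $X(t)\in\mathbb{R}^{N\times d}$ whose $i$-th row is $x_i(t)^{\top}$, the diffusion ODE becomes $\dot X(t)=-\gamma L_G X(t)$, where $L_G=\mathrm{Diag}(d_1,\dots,d_N)-W$ is the (symmetric) weighted graph Laplacian with $d_i=\sum_j w_{ij}$; hence $X(t)=e^{-\gamma L_G t}X(0)$. The hypothesis that each $S_{i,j}$ is a connected component of $G$ means that, after relabelling the vertices, $L_G$ is block diagonal with one block $L_{i,j}$ per subset, each $L_{i,j}$ being the Laplacian of a connected graph. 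Therefore each $L_{i,j}$ has $0$ as a simple eigenvalue with the all-ones eigenvector and all its remaining eigenvalues strictly positive; let $\lambda^{*}>0$ denote the smallest positive eigenvalue occurring among all the blocks.

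Next I would extract the limiting configuration from the spectral decomposition. As $t\to\infty$ the operator $e^{-\gamma L_G t}$ converges to the block-diagonal orthogonal projector onto the per-component constant vectors, so for every index $i$ belonging to $S_{i,j}$ one has $x_i(t)\to c_{i,j}$, where $c_{i,j}$ is a fixed convex combination (the average) of the initial points of $S_{i,j}$, with the quantitative rate $\|x_i(t)-c_{i,j}\|\le C e^{-\gamma\lambda^{*}t}$ for a constant $C$ depending only on the initial data. Two consequences follow at once. First, all points of $S_{i,j}(t)$ collapse to the single point $c_{i,j}$, so choosing $D(t):=\max_{(i,j)}\mathrm{diam}(S_{i,j}(t))$ we get $D(t)\le 4C^{2}e^{-2\gamma\lambda^{*}t}\to 0$. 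Second — and this is exactly where convexity of the subsets is used — the centroid $c_{i,j}$, being a convex combination of points of $S_{i,j}$, lies in $S_{i,j}$; hence for any two distinct subsets $\|c_{i_1,j_1}-c_{i_2,j_2}\|^{2}\ge \mathrm{dist}(S_{i_1,j_1},S_{i_2,j_2})\ge L>0$.

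Finally I would combine the two estimates. Since $\mathrm{dist}(S_{i_1,j_1}(t),S_{i_2,j_2}(t))\to\|c_{i_1,j_1}-c_{i_2,j_2}\|^{2}\ge L$ and there are only finitely many pairs of subsets, the tightest distance bound $L(t):=\min_{(i_1,j_1)\neq(i_2,j_2)}\mathrm{dist}(S_{i_1,j_1}(t),S_{i_2,j_2}(t))$ satisfies $\liminf_{t\to\infty}L(t)\ge L>0$, so $L(t)\ge L/2$ for all large $t$; together with $D(t)\le 4C^{2}e^{-2\gamma\lambda^{*}t}$ this yields $L(t)/D(t)\ge (L/8C^{2})\,e^{2\gamma\lambda^{*}t}\to\infty$, i.e.\ divergence at the exponential rate $2\gamma\lambda^{*}$, which is the claimed statement.

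I expect the only genuinely delicate point to be the lower bound on $L(t)$: one must rule out the degenerate possibility that the limiting centroids of two different subclasses coincide or come arbitrarily close, and this is precisely what the convexity-plus-disjointness hypothesis prevents — without convexity, interleaved subsets could have overlapping convex hulls, the centroids could collapse together, and the ratio would fail to blow up. A secondary technical care is the convergence-rate estimate if $W$ were not symmetric (the Laplacian might then fail to be diagonalizable); under the symmetric normalization used in Algorithm~\ref{alg} this issue does not arise, and in the general case one can absorb the polynomial factors from Jordan blocks by using a slightly smaller exponent. The diameter estimate and the remaining bookkeeping are routine.
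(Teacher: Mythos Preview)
Your argument is correct, and the spectral analysis of the graph Laplacian for the exponential decay of $D(t)$ is exactly the paper's own route. The genuine difference lies in how you control $L(t)$. The paper establishes the stronger monotonicity statement $L(t)\ge L(0)=L$ for \emph{all} $t$: writing the forward Euler step as $x_i^{k+1}=(1-\gamma)x_i^k+\gamma\sum_j w_{ij}x_j^k$ and observing that, by the connected-component hypothesis, only points from the same subset carry nonzero weight, each updated point is a convex combination of points currently in $S_{i,j}$ and hence (by convexity) remains in the original convex region $S_{i,j}$. Thus the regions never escape their initial convex containers, and the inter-subset distance can only increase. You, by contrast, use convexity only at $t=\infty$: the centroid is a convex combination of the initial data, so the limiting points are separated by at least $L$, whence $\liminf_{t\to\infty}L(t)\ge L$. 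Both uses of convexity are legitimate; the paper's gives a cleaner non-asymptotic bound (no need for ``$L(t)\ge L/2$ for large $t$''), while yours stays entirely within the continuous-time spectral picture and avoids the discretization detour. Your closing remark about the degeneracy that convexity rules out is well taken and in fact sharper than the paper's discussion of why the hypothesis is needed.
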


The basic idea for proving Proposition \ref{prop:1} is to show that $L(t)$ is nonincreasing while $D(t)$ converges to zero at exponential rate. Using the spectral clustering theory, it is proved that each subclass converges to its center along with the diffusion process.

To meet the assumption that points in each subset form a connected component in the graph, we should ensure (1) there is no edge that connect points among different subsets (2) any two vertices in the same subset $S_{i,j}$ are connected to each other. By the construction of weight matrix, each vertex in graph $G$ is only connected to its $n_{\mathrm{top}}$ nearest neighbors. Thus the first argument is satisfied when the nearest neighbors only contain points from the same subclass, which requires that $n_{\mathrm{top}}$ should not be too large. On the other hand, the threshold for the connectivity of a k-nearest neighbor graph is $O(\mathrm{log} n)$~\cite{balister2005connectivity}, where $n$ in our setting should be the number of points in each subset.

The above analysis reveals that the diffusion mechanism is helpful for organizing data points by making data points from the same subclass region closer to each other while others relatively further away. As the Distance-Diameter ratio increases, it is easy for distinguishing data points using ResNet flow. This property is important for SSL/FSL problems as it deeply explores the relationship among points.

\section{Experiments}
In this section, we show the efficacy of diffusion mechanism on synthetic data, and report the performance of the Diff-ResNet on semi-supervised graph learning and few-shot learning tasks. \footnote{Code at \url{https://github.com/shwangtangjun/Diff-ResNet}.}

\subsection{Synthetic Data}
We conduct experiments on four classical synthetic datasets: XOR, moon, circle and spiral. In XOR dataset, we directly apply diffusion without any convection. Then we can clearly see the evolution process of points that verifies the Proposition \ref{prop:1}. The other three datasets are used to show the effectiveness of diffusion in classification tasks. In this section, we only show results of XOR and circle datasets. Due to the space limitation, please refer to Appendix E.1.4 for more results.

We randomly collect 100 points each in four circles centered at (0,0), (0,2), (2,0), (2,2), respectively, with radius 0.75. These four circles are treated as the subsets corresponding to four subclasses. The circles centered at (0,0) and (2,2) belong to the same class, and points from them are colored red. The blue ones are generated similarly. Here, we show the evolution of points as diffusion strength goes to infinity. As stated in the Section~\ref{section:algorithm}, we stack diffusion steps with small step size $\gamma$ to ensure stability. In Figure~\ref{fig:XOR} , points distribution after 1, 10, 20 and 200 diffusion steps are given. 
\begin{figure}[hbtp]
	\subfloat[raw]{\raisebox{1.35ex}{\includegraphics[width=0.455\linewidth]{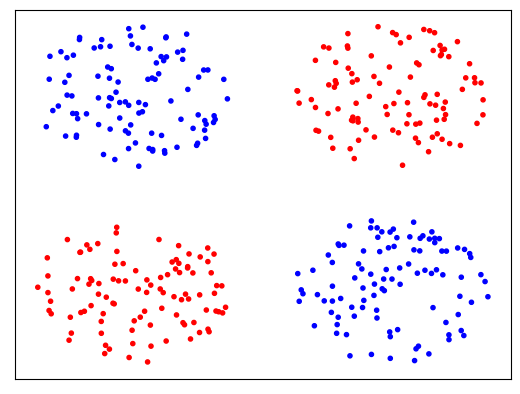}}}
	\hfill
	\subfloat[D and L/D]{\includegraphics[width=0.52\linewidth]{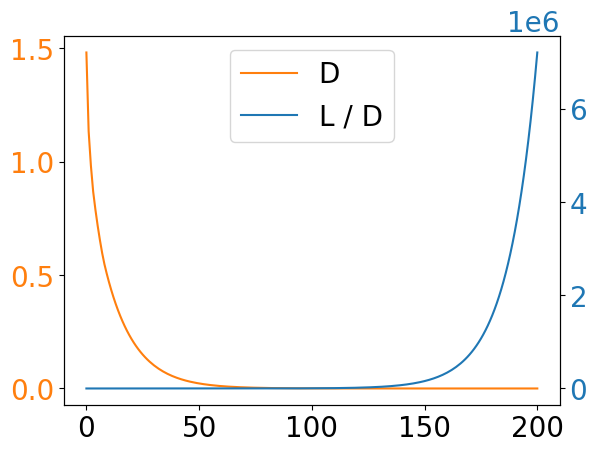}}
		
	\subfloat[step=1]{\includegraphics[width=0.25\linewidth]{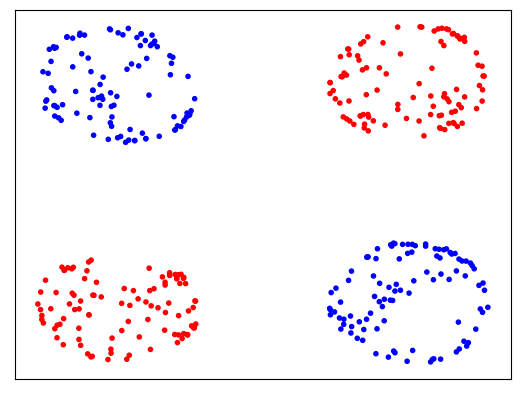}}
	\subfloat[step=10]{\includegraphics[width=0.25\linewidth]{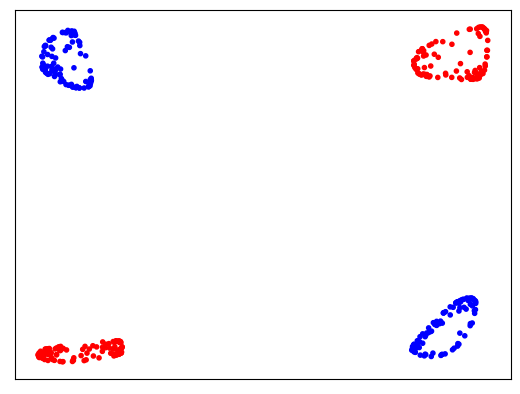}}
	\subfloat[step=20]{\includegraphics[width=0.25\linewidth]{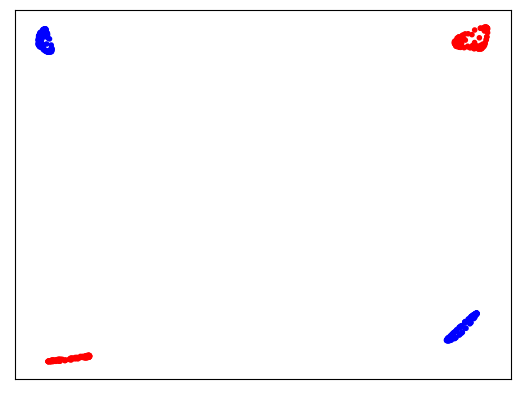}}
	\subfloat[step=200]{\includegraphics[width=0.25\linewidth]{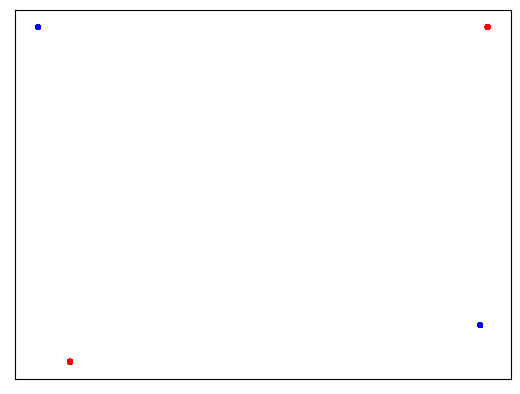}}
	\caption{Visualization of diffusion Mechanism on XOR dataset. (a) is raw data. (b) shows the evolution of D and L/D with diffusion steps. 
	(c)(d)(e)(f) depicts the evolution of points.\label{fig:XOR}}
\end{figure}
In the above example, the initial diameter is $D=1.5$ while the distance is $L=0.5$, which does not meet the sufficient condition $L>D$ in Proposition 1. However, as shown in Figure~\ref{fig:XOR}, this diffusion still works well. Data points in same subclass converge to a single point. We also observe from Figure~\ref{fig:XOR} (b) that the Distance-Diameter Ratio indeed grows exponentially to infinity. 

\begin{Remark}
    Some may doubt the use of terminology, diffusion, as it actually draws similar points together and create high density regions visually. However, the phenomenon shown in ~\ref{fig:XOR} is not contradictory to the definition of diffusion. The energy of a point is represented by its coordinate. We expect that neighboring elements in the graph will exchange energy until that energy is spread out evenly throughout all of the elements that are connected to each other. As a result, the diffusion mechanism acts as gathering points together.
    
\end{Remark}

Next, we show the effectiveness of diffusion in residual networks on binary classification tasks containing 1000 planar data points forming two circles. Two classes are marked with different colors. We use residual networks with hidden dimensions 2 (so that it will be convenient for us to visualize the features). The details of experiment settings can be found in Appendix E.1. During training residual networks with or without diffusion mechanism, we plot the features before the final classification layer in Figure~\ref{fig:two_circle}. Note that what we plot are not the input data points. Thus, even without diffusion, the points have to pass through a randomly initialized residual block. So in subfigure (c) of Figure~\ref{fig:two_circle}, features are different from raw input points in (a). The results of circle dataset is shown in Figure~\ref{fig:two_circle}.

\begin{figure}[hbtp]
	\centering
	\subfloat[raw]{\raisebox{1.6ex}{\includegraphics[width=0.45\linewidth]{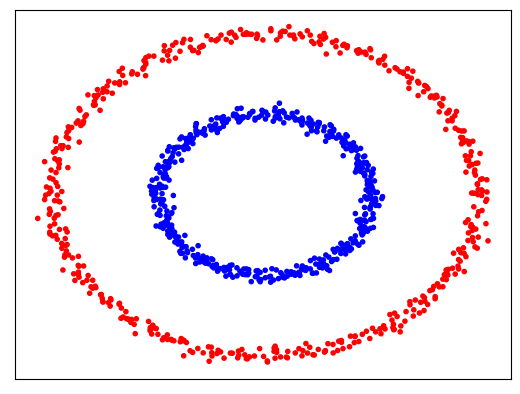}}}
	\hfill
	\subfloat[accuracy]{\includegraphics[width=0.5\linewidth]{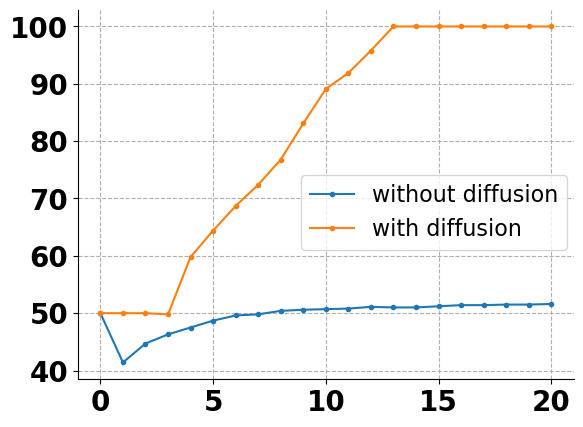}}
	
	\subfloat[w/o, epoch=0]{\includegraphics[width=0.33\linewidth]{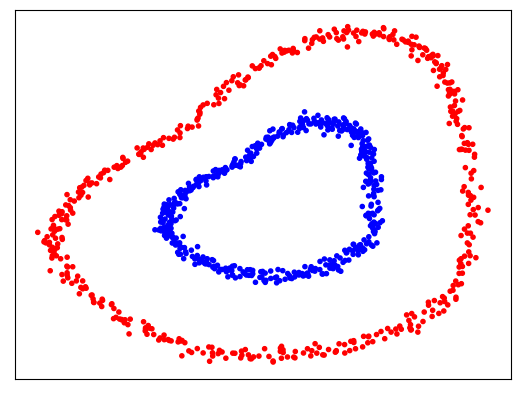}}
	\hfill
	\subfloat[w/o, epoch=10]{\includegraphics[width=0.33\linewidth]{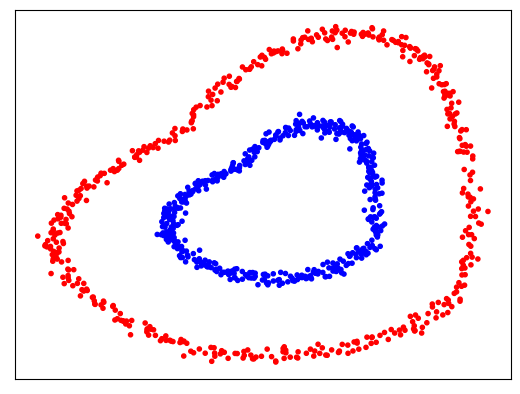}}
	\hfill
	\subfloat[w/o, epoch=20]{\includegraphics[width=0.33\linewidth]{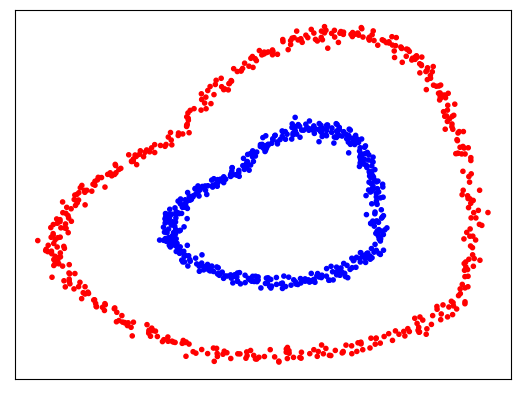}}
	
	\subfloat[w, epoch=0]{\includegraphics[width=0.33\linewidth]{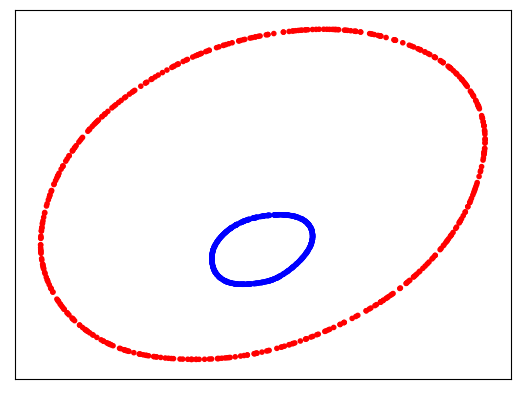}}
	\hfill
	\subfloat[w, epoch=10]{\includegraphics[width=0.33\linewidth]{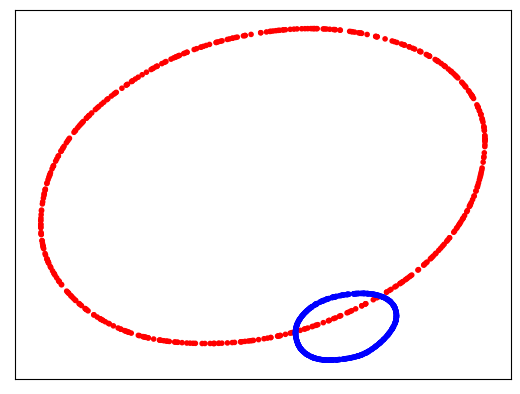}}
	\hfill
	\subfloat[w, epoch=20]{\includegraphics[width=0.33\linewidth]{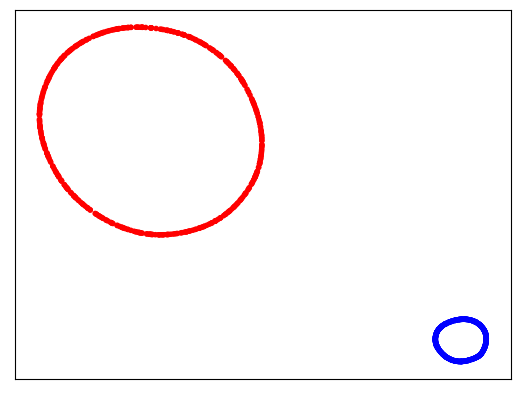}}
	\caption{ResNet and DiffResNet on circle dataset, (a) shows the position of raw data points. (b) figures the accuracy of classification tasks with the training epoch. (c)(d)(e) are features before the final classification layer without diffusion from different epochs. (f)(g)(h) are features with diffusion.}
	\label{fig:two_circle}
\end{figure}

As shown in Figure~\ref{fig:two_circle}, diffusion can reduce the noise. In Figure~\ref{fig:two_circle} (f),(g),(h), the features are very clean while in Figure~\ref{fig:two_circle} (f),(g),(h), features are still noisy. Moreover, diffusion step makes the final feature much easier to separate. In Figure~\ref{fig:two_circle} (h), features can be easily separated by a straight line while the features are not linear separable without diffusion as shown in Figure~\ref{fig:two_circle} (e). It is not surprise that in this example ResNet fails to give correct classification considering it only has 18 parameters in total. With the help of diffusion step, even this small network with only 18 parameters can give correct classification which demonstrates that diffusion is very useful in classification problem.
\subsection{Graph Learning}
We investigate the effect of diffusion on semi-supervised learning problems in graph. In diffusion step, a key point is how to determine the weights that can properly depict the relationship between data points. Nonetheless, in graph this is not a problem since the weights have already been given in the form of adjacent matrix. We report results for the most widely used citation network benchmarks including Cora, Citeseer and Pubmed. These datasets are citation networks in which nodes are documents, edges are citation links and features are sparse bag-of-words vectors. The concrete dataset statistics is given in Appendix E.2.1. Moreover, rather than using the fixed Planetoid~\cite{yang2016revisiting} split, we follow~\cite{shchur2018pitfalls} and report results for all datasets using 100 random splits with 20 random initializations each.

The mainstream approach in graph learning, such as GCN~\cite{kipf2017semi}, GraphSAGE~\cite{hamilton2017inductive} and GAT~\cite{velickovic2018graph}, contains aggregation steps, which aggregate feature information from neighbors using the adjacent matrix, and then predict labels with aggregated information. Different from these conventional paradigm, our method is composed of convection-and-then-diffusion step. The convection step make full use of the label information, while the diffusion step exchange the feature information among data samples. The adjacent matrix is only used in the diffusion step.

We compare our method with several graph learning methods: three of the most popular architectures, GCN, GraphSAGE (its two variants), GAT, and recent ODE-based GNN architectures, Continuous Graph Neural Networks
(CGNN)~\cite{xhonneux2020continuous}, Graph Neural Ordinary
Differential Equations(GDE)~\cite{poli2019graph}, and Graph Neural Diffusion(GRAND)~\cite{chamberlain2021grand}. The detailed network structure and parameter settings can be found in Appendix E.2. The classification results are reported in Table~\ref{table:graph}. Diff-ResNet is significantly better than ResNet without diffusion(No-Diff-ResNet). It achieves more than 15\% accuracy boost on average, which is a strong evidence for the benefit of diffusion. Moreover, despite the large discrepancy between our diffusion network and mainstream networks, our method still achieves competitive results with respect to classical and recent methods in graph learning. Thus, we propose an alternative path for semi-supervised graph learning problems. 


\begin{table}[hbtp]
\caption{The mean accuracy and std (\%) of node classification over 100 random dataset splits and 20 random initializations each.}
	\centering
	\begin{tabular}{lccc}
		\toprule
		& Cora          & Citeseer      & Pubmed        \\ \midrule
        MLP & 58.2 $\pm$ 2.1 & 59.1 $\pm$ 2.3 & 70.0 $\pm$ 2.1 \\
		GCN\cite{kipf2017semi} & 81.5 $\pm$ 1.3         & 71.9 $\pm$ 1.9          & 77.8 $\pm$ 2.9        \\
		GraphSAGE-mean\cite{hamilton2017inductive} & 79.2 $\pm$ 7.7          & 71.6 $\pm$ 1.9          & 77.4 $\pm$ 2.2          \\
		GraphSAGE-maxpool\cite{hamilton2017inductive} & 76.6 $\pm$ 1.9         & 67.5 $\pm$ 2.3          & 76.1 $\pm$ 2.3          \\
		GAT\cite{velickovic2018graph}         & 81.8 $\pm$ 1.3        & 71.4 $\pm$ 1.9         & 78.7 $\pm$ 2.3          \\
        CGNN\cite{xhonneux2020continuous} & 81.4 $\pm$ 1.6 & 66.9 $\pm$ 1.8 & 66.6 $\pm$ 4.4 \\
        GDE\cite{poli2019graph} & 78.7 $\pm$ 2.2 & 71.8 $\pm$ 1.1 & 73.9 $\pm$ 3.7 \\
        GRAND\cite{chamberlain2021grand} & \textbf{83.6} $\pm$ 1.0 & 73.4 $\pm$ 0.5 & 78.8 $\pm$ 1.7\\
		No-Diff-ResNet & 58.9 $\pm$ 1.9 & 61.9 $\pm$ 2.1 & 70.1 $\pm$ 2.1 \\
		Diff-ResNet(ours) & 82.1 $\pm$ 1.1 & \textbf{74.6} $\pm$ 1.8 & \textbf{80.1} $\pm$ 2.0 \\ \bottomrule
	\end{tabular}
	\label{table:graph}
\end{table}

Additionally, it is reported that methods based on aggregation of neighboring information suffers over-smoothing problems with increasing depth~\cite{li2018deeper, chen2020simple}. As is observed in Figure~\ref{fig:graph}, the performance of GCN drops more than 50\% on average when the network depth increases to 32. Different from GCN, our Diff-ResNet does not use aggregation, thus the representations of the
nodes will not converge to a certain value and become indistinguishable. When the number of layers increases to 32, performance of Diff-ResNet only drops less than 10\%, which is partly due to the deep network training burden. This serves as an evidence that our network structure is far different from mainstream architectures.

\begin{figure}[hbtp!]
	\centering
	\includegraphics[width=0.9\linewidth]{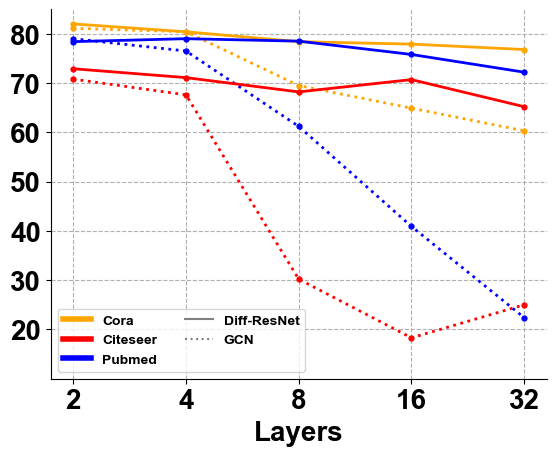}
	\caption{Performance of architectures of different depth. The x-axis represents number of layers, y-axis is the accuracy.}
	\label{fig:graph}
\end{figure}

\subsection{Few-shot Learning}
\label{section:fewshot}

\begin{table*}[hbtp]
	\begin{center}
	\caption{Ablation Study of Diffusion Mechanism. Table shows average classification accuracy (\%). \label{table:ablation}}
	\begin{normalsize}
    \begin{tabular}{cccccccc}
    \toprule
    & &\multicolumn{2}{c}{\textbf{\textit{mini}ImageNet}}&\multicolumn{2}{c}{\textbf{\textit{tiered}ImageNet}}&\multicolumn{2}{c}{\textbf{CUB}} \\
    \cmidrule{3-8}
    \textbf{Backbone}&\textbf{Method} & \textbf{1-shot}& \textbf{5-shot}& \textbf{1-shot}& \textbf{5-shot}& \textbf{1-shot}& \textbf{5-shot}\\
	\midrule
	\multirow{5}{*}{ResNet-18}& Nearest Prototype & 57.09 & 79.30 & 63.46 & 83.53 & 66.53 & 86.03\\
	& Diffusion & 57.81 & 79.54 & 64.38 & 83.83 & 67.96 & 86.49\\
	& Convection & 53.04 & 79.58 & 56.17 & 82.79 & 58.68 & 86.10\\
	& External Convection-Diffusion & 55.30 & 79.55 & 65.15 & 83.74 & 70.00 & 86.92\\
	& Internal Convection-Diffusion & \textbf{68.47} & \textbf{80.02} & \textbf{75.31} & \textbf{84.19} & \textbf{79.12} & \textbf{87.18}\\
	\midrule
	\multirow{5}{*}{WRN}& Nearest Prototype & 59.54 & 79.81 & 65.60 & 84.76 & 68.56 & 86.13 \\
	& Diffusion & 60.30 & 80.31 & 66.59 & 85.17 & 69.70 & 86.58\\
	& Convection & 57.07 & 80.92 & 58.79 & 84.69 & 62.84 & 87.51\\
	& External Convection-Diffusion & 59.46 & 80.70 & 68.10 & 85.31 & 73.08 & 87.08 \\
	& Internal Convection-Diffusion & \textbf{69.77} & \textbf{81.17} & \textbf{77.44} & \textbf{85.50} & \textbf{80.31} & \textbf{87.76}\\
	\bottomrule
	\end{tabular}
	\end{normalsize}
	\end{center}
\end{table*}

Given a dataset $\mathbb{X}= \mathbb{X}_{s}\cup\mathbb{X}_{q}$, where $\mathbb{X}_{s} = \{(x_i,y_i)\}_{i=1}^{N_1}$ is the support set with label information and $\mathbb{X}_{q} = \{x_j\}_{j=1}^{N_2}$ is the query set without labels, the goal of few-shot learning is to find the labels of points in the query set when the size of support set $|N_1|$ is very small. Among existing few-shot learning methods, embedding learning is a typical approach, which maps each sample to a low dimensional spaces such that similar samples are close while dissimilar samples are far away. The embedding function can be learned by a deep neural network (a.k.a. backbone), which is pretrained using a large number of labeled examples over base classes. In the few-shot learning problems, the pretrained embedding function is fixed and maps all data samples into the embedded space.

We conduct experiments on three benchmarks for few-shot image classification: \textit{mini}ImageNet, \textit{tiered}ImageNet and CUB. The \textit{mini}ImageNet and \textit{tiered}ImageNet are both subsets of the larger ILSVRC-12 dataset~\cite{russakovsky2015imagenet}, with 100 classes and 608 classes respectively. CUB-200-2011~\cite{wah2011caltech} is another fine-grained image classification dataset with 200 classes. We follow the standard dataset split as in previous papers~\cite{Sachin2017,wang2019simpleshot, chen2019closerfewshot}. All images are resized to $84 \times 84$, following~\cite{vinyals2016matching}.

We choose two widely used networks, ResNet-18~\cite{he2016deep} and WRN-28-10~\cite{Zagoruyko2016WRN} as our backbone: the latter widens the residual blocks by adding more convolutional layers (28 layers) and feature planes (10 times). First, we train the backbone on the base classes using cross-entropy loss with label smoothing factor of 0.1, SGD optimizer, standard data augmentation and a mini-batch size of 256 to train all models. Note that our training procedure does not involve any meta-learning or episodic-training strategy. The model is trained for $T=100$ epochs for \textit{mini}ImageNet and \textit{tiered}ImageNet, and $T=400$ epochs for CUB due to its small size. We use a multi-step scheduler, which decays the learning rate by 0.1 at $0.5T$ and $0.75T$. We evaluate the nearest-prototype classification accuracy on the validation set and obtain the best model. The embedding training process is in general similar to that in SimpleShot~\cite{wang2019simpleshot} and LaplacianShot~\cite{ziko2020laplacian}, but details are slightly different. Eventually we get an embedding function which maps the original data point to $\mathbb{R}^M$ where $M=512$ for ResNet-18 and $M=640$ for WRN-28-10.


After we obtain a feature vector for every data point, we compare the performance of 5 typical classification methods to emphasize the effectiveness of diffusion mechanism.

\textbf{(1) Nearest Prototype}. The prototype $m_c$ of each class $c$ is the average of support set $\mathbb{X}_s^c$
$$m_c=\frac{1}{|\mathbb{X}_s^c|}\sum_{x\in \mathbb{X}_s^c} x$$
Then the query sample is classified as class $c$ if it is closest to prototype $m_c$ in Euclidean distance. It is the most natural classification method, and serves as a baseline.

\textbf{(2) Diffusion}. We try to minimize an objective function with Laplacian regularizer in an iterative way. In LaplacianShot~\cite{ziko2020laplacian}, the author optimizes the loss function below
$$\mathcal{L}= \sum_{i=1}^{N_2} \sum_{c=1}^C y_{i,c} d(x_i-m_c)+ \frac{\lambda}{2} \sum_{i,j=1}^{N_2} w\left(\mathbf{x}_{i}, \mathbf{x}_{j}\right)\left\|\mathbf{y}_{i}-\mathbf{y}_{j}\right\|^{2}$$
$N_2=|\mathbb{X}_q|$ is the number of query samples. $\mathbf{y}_{i}=[y_{i,1},\cdots,y_{i,C}]\in \{0,1\}^C$ is in the $C$-dimensional simplex, which assigns label to each query point. $d$ is Euclidean distance. $w\left(\mathbf{x}_{i}, \mathbf{x}_{j}\right)$ is the weight between $\mathbf{x}_{i}$ and $\mathbf{x}_{j}$.

The first loss term functions similar to nearest prototype classification. The second loss term is the well-known Laplacian regularizer. We use the iterative algorithm provided by LaplacianShot~\cite{ziko2020laplacian} to minimize the objective function $\min_{\mathbf{y}_{i}} \mathcal{L}$. Since there is no neural network, or manipulations on features, rather just label propagation, we name this method as \textbf{Diffusion}. 

\textbf{(3) Convection.} We minimize cross entropy loss
$$\mathcal{L}= -\sum_{i=1}^{N_1} \sum_{c=1}^C y_{i,c} \mathrm{log}(f(x_i)_c)$$
on the support set using gradient descent and a simple 2-layer residual network $f$. $N_1=| \mathbb{X}_{s}|$ is the number of support samples. The detailed network structure can be found in Appendix E.3.2. There is no relationship between data points during training, and the residual network is the counterpart of a convection ODE, so we refer to this method as \textbf{Convection}. 

\textbf{(4) External Convection-Diffusion.} We minimize cross entropy loss on the support set plus Laplacian regularizer using gradient descent and simple 2-layer residual network. The difference from \textbf{(3)} is that we add a Laplacian regularizer
\begin{align*}
    \mathcal{L}= -&\sum_{i=1}^{N_1} \sum_{c=1}^C y_{i,c} \mathrm{log}(f(x_i)_c)\\ +&\frac{\mu}{2}\sum_{i,j=1}^{N} w\left(\mathbf{x}_{i}, \mathbf{x}_{j}\right)\left\|f(x_i)-f(x_j)\right\|^{2}
\end{align*}
to the loss function. $N=N_1+N_2=| \mathbb{X}_{s}\cup\mathbb{X}_{q}|$ is the total number of support samples and query samples. The first variation of the Laplacian term coincides with the diffusion term of our convection-diffusion ODE. The residual network structure corresponds to convection, while the Laplacian regularizer corresponds to diffusion. As diffusion appears in the loss function externally, we refer to this method as \textbf{External Convection-Diffusion}. 

\textbf{(5) Internal Convection-Diffusion.} We minimize cross entropy loss
$$\mathcal{L}= -\sum_{i=1}^{N_1} \sum_{c=1}^C y_{i,c} \mathrm{log}(f(x_i)_c)$$
on the support set using gradient descent and simple 2-layer \textbf{diffusion} residual network (Diff-ResNet). The loss term is the same as \textbf{(3)}, while the difference is that we add diffusion layers internally in the network structure. By comparing \textbf{(4)} and \textbf{(5)}, we want to verify the necessity of incorporating diffusion as part of network structure, rather than as part of loss function.

Following the standard evaluation protocol~\cite{wang2019simpleshot}, we randomly sample 1000 5-way-1-shot and 5-way-5-shot classification tasks from the test classes, with 15 query samples in each class, and report the average accuracy of 5 methods above in Table \ref{table:ablation}. \textbf{Internal Convection-Diffusion}, which uses our proposed Diff-ResNet, achieves best results in all tasks. In 1-shot tasks, it has a performance boost of nearly 20\% compared to \textbf{Convection}, which clearly states the effectiveness of diffusion. Additionally, compared to \textbf{External Convection-Diffusion}, it also has approximately 10\% increase, indicating that embedding diffusion in the network structure is far more efficient than adding diffusion in the loss term. In 5-shot tasks, as the nearest prototype method has already provided competitive baseline, the increase is not remarkable, but still has about 1\% improvement against \textbf{Convection}.

We use T-SNE~\cite{van2008visualizing} to visualize the features before and after diffusion in 1-shot and 5-shot task in Figure~\ref{fig:tsne}. Labeled data are marked as stars, and unlabeled data are marked as circles. In 1-shot scenario, we can observe that points are hard to separate at first. However, with the help of diffusion mechanism, points in the same subclass are driven closer, making it easier to classify. In 5-shot tasks, since the sample points already have nice separability, the improvement with diffusion mechanism is not so remarked as that in 1-shot tasks. Nonetheless, we could verify the necessity of introducing subclass in Structured Data Assumption, as the blue points are indeed divided into two subsets.

\begin{figure}[hbtp]
	\centering
	\subfloat[1-shot, before]{\includegraphics[width=0.5\linewidth]{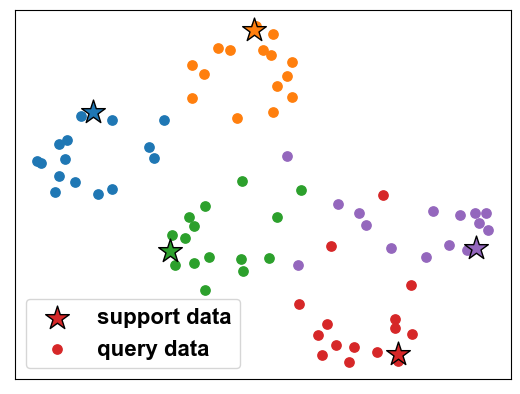}}
	\hfill
	\subfloat[1-shot, after]{\includegraphics[width=0.5\linewidth]{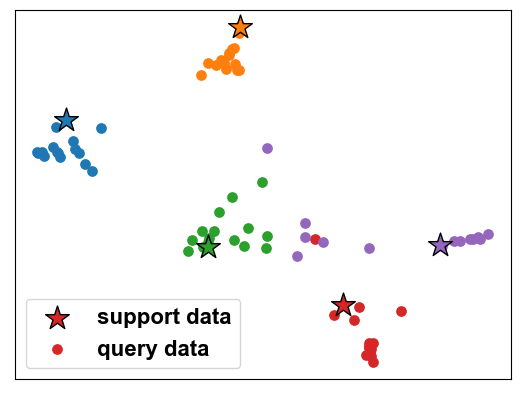}}
	
	\subfloat[5-shot, before]{\includegraphics[width=0.5\linewidth]{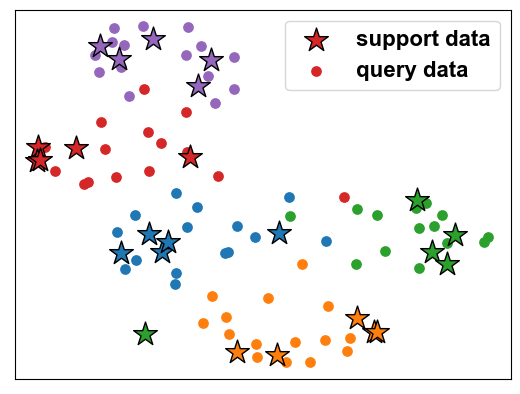}}
	\hfill
	\subfloat[5-shot, after]{\includegraphics[width=0.5\linewidth]{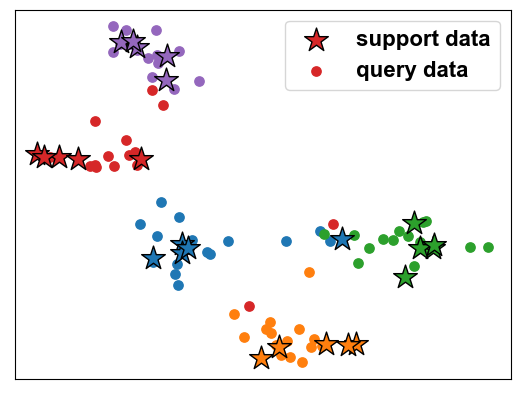}}
	\caption{T-SNE visualization of features before and after diffusion steps. Stars represent labeled points(support data). Circles represent unlabeled points(query data).}
	\label{fig:tsne}
\end{figure}

\begin{figure}[hbtp]
	\centering
	\subfloat[ResNet-18]{\includegraphics[width=0.5\linewidth]{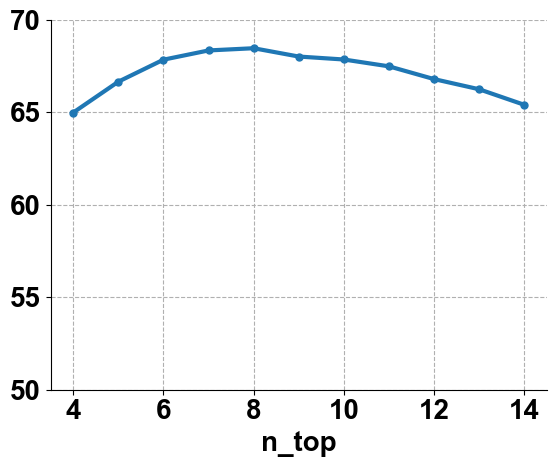}}
	\hfill
	\subfloat[WRN]{\includegraphics[width=0.5\linewidth]{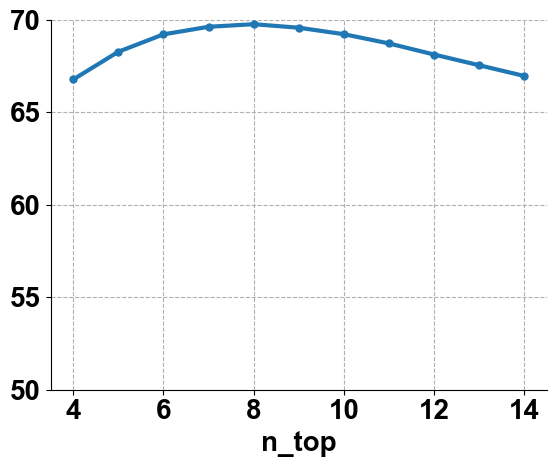}}
	
	\caption{The effect of $n_{\mathrm{top}}$ on \textit{mini}ImageNet with ResNet-18 and WRN as backbone. The x-axis represents $n_{\mathrm{top}}$, y-axis is the accuracy.}
	\label{fig:acc_ntop}
\end{figure}

Furthermore, we study the effect of several important parameters in diffusion mechanism: weight truncation parameter $n_{\mathrm{top}}$, diffusion step number $r$ and step size $\gamma$. We conduct experiments on 1000 5-way-1-shot tasks on \textit{mini}ImageNet with ResNet-18 and WRN as backbone, and report the average accuracy with different parameters.

First, we tune $n_{\mathrm{top}}$ in the $\mathrm{Sparse}$ operator. We choose $\sigma=[n_{\mathrm{top}}/2]$\footnote{$\sigma(x_i) = k$ means $\sigma$ is chosen to be the $k$-th closest distance from a specific point $x_i$ , so it varies with points.}. The results are depicted in Figure~\ref{fig:acc_ntop}. From the figure, we notice that $n_{\mathrm{top}}$ should be neither too small nor too large. Small $n_{\mathrm{top}}$ may 
break up large local clusters, while large $n_{\mathrm{top}}$ will include points from different classes into neighborhood. However, the classification accuracy is not very sensitive to $n_{\mathrm{top}}$, compared to diffusion strength.

\begin{figure}[hbtp]
	\centering
	\subfloat[ResNet-18]{\includegraphics[width=0.5\linewidth]{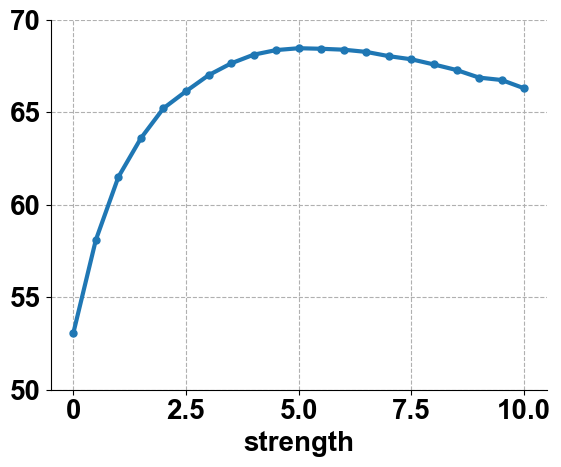}}
	\hfill
	\subfloat[WRN]{\includegraphics[width=0.5\linewidth]{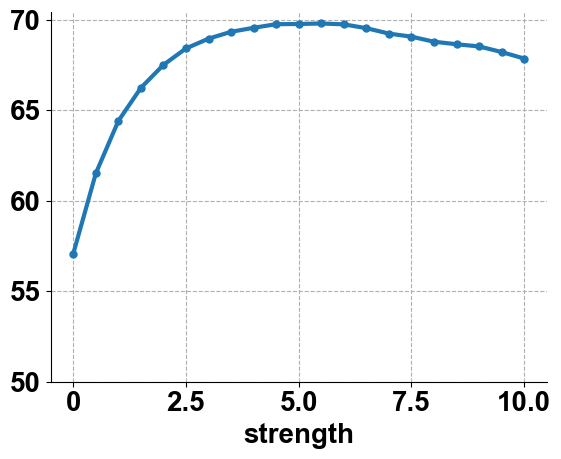}}
	
	\caption{The effect of total diffusion strength on \textit{mini}ImageNet with ResNet-18 and WRN as backbone. The x-axis is total strength $r\gamma$, y-axis is the accuracy.}
	\label{fig:acc_strength}
\end{figure}

Next, we study the effect of total diffusion strength, which is step size times step numbers $r\gamma$. We fix $\gamma=0.5$ and adjust $r$, ranging from 0 to 20. The results are shown in Figure~\ref{fig:acc_strength}. Based on our experiments, we should not push the strength to infinity as in synthetic data, because real data has much more complicated geometric structure such that we can not expect each class converge to a single point.

\begin{figure}[hbtp]
	\centering
	\subfloat[ResNet-18]{\includegraphics[width=0.5\linewidth]{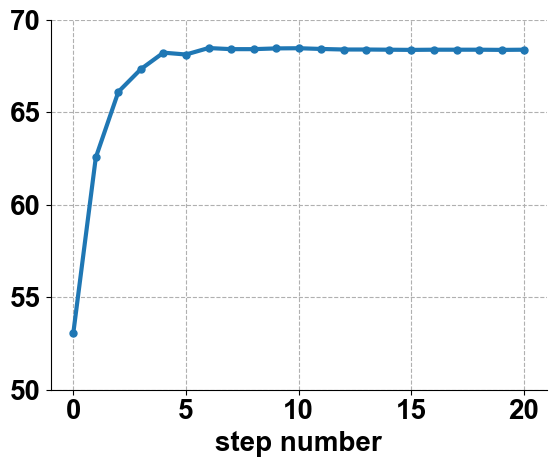}}
	\hfill
	\subfloat[WRN]{\includegraphics[width=0.5\linewidth]{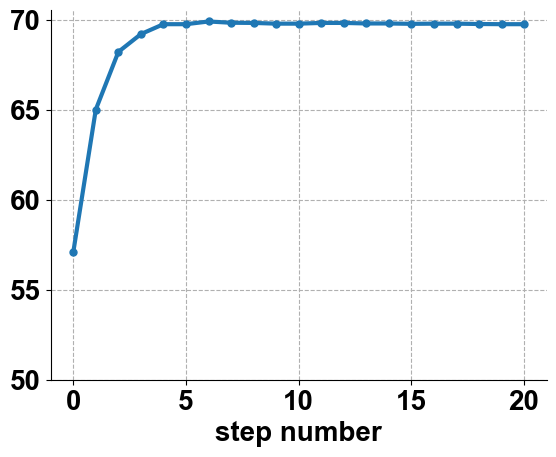}}
	
	\caption{The effect of number of steps $r$ on \textit{mini}ImageNet with ResNet-18 and WRN as backbone. The x-axis represents step number $r$, y-axis is the accuracy.}
	\label{fig:acc_step}
\end{figure}

Lastly, we fix total strength $r\gamma=5.0$, and change $r$ from 0 to 20, to study the effect of the step size. We require $\gamma \leqslant 1$ for stability. When the total diffusion strength is too large when $r < 5$, we set $\gamma=1.0$. As shown in Figure~\ref{fig:acc_step}, when the total diffusion strength is fixed, the accuracy almost keeps the same. Thus, stacking too many layers will not benefit.

To conclude, the performance of our Diff-ResNet mostly depend on the total diffusion strength $r\gamma$. With fixed strength, the number of diffusion layers $r$ and the truncation parameter $n_{\text{top}}$ has little effect on performance. We have briefly discussed the choice of $n_{\text{top}}$ in Subsection~\ref{subsec:theoretical} after Proposition~\ref{prop:1}, and state that a neither too large nor too small $n_{\text{top}}$ is better. Rather than cherry-picking $n_{\text{top}}$ according to different datasets or backbones, we fix $n_{\text{top}}=8$ and $\sigma=4$ in all few-shot learning experiments, which shows that a moderate $n_{\text{top}}$ is sufficient for good results. As for the number of diffusion layers $r$, the choice is made by fixing the diffusion step size $\gamma=0.5$, and choose the best diffusion strength on the validation set. From our experiment results, we observe that with fixed step size $\gamma$, the number of layers $r$ should be larger for 1-shot tasks, and smaller for 5-shot tasks. Also, the optimal diffusion strength varies with dataset, since the geometric property of each dataset is different. The detailed parameter choice is provided in Appendix E.3.

\begin{table*}[hbtp]
\caption{Average accuracy (in \%) and 95\% confidence interval in \textit{mini}ImageNet, \textit{tiered}ImageNet and CUB. We mark transductive learning method with \dag. *~denotes that the result is reimplemented using public official code with our pretrained backbone.}
\label{table:fewshot}
\begin{center}
\begin{small}
\begin{tabular}{lc|cc|cc|cc}
\toprule
& &\multicolumn{2}{c|}{\textbf{\textit{mini}ImageNet}}&\multicolumn{2}{c|}{\textbf{\textit{tiered}ImageNet}}&\multicolumn{2}{c}{\textbf{CUB}} \\
\cline{3-8}
\textbf{Methods} & \textbf{Backbone}&\textbf{1-shot}& \textbf{5-shot}& \textbf{1-shot}& \textbf{5-shot}& \textbf{1-shot}& \textbf{5-shot}\\
\hline
MAML \cite{finn2017model} & ResNet-18 & 49.61 $\pm$ 0.92 & 65.72 $\pm$ 0.77&-&-& 69.96 $\pm$ 1.01&82.70 $\pm$ 0.65\\
Baseline \cite{chen2019closerfewshot} & ResNet-18 & 51.87 $\pm$ 0.77 & 75.68 $\pm$ 0.63&-&-&67.02 $\pm$ 0.90 & 83.58 $\pm$ 0.54\\
RelationNet \cite{sung2018learning} & ResNet-18 & 52.48 $\pm$ 0.86 & 69.83 $\pm$ 0.68& 54.48 $\pm$ 0.93&71.32 $\pm$ 0.78& 67.59 $\pm$ 1.02 & 82.75 $\pm$ 0.58\\
MatchingNet \cite{vinyals2016matching} & ResNet-18 & 52.91 $\pm$ 0.88 & 68.88 $\pm$ 0.69&-&-&72.36 $\pm$ 0.90&83.64 $\pm$ 0.60\\
ProtoNet \cite{snell2017prototypical} & ResNet-18 & 54.16 $\pm$ 0.82 & 73.68 $\pm$ 0.65&53.31 $\pm$ 0.89&72.69 $\pm$ 0.74&71.88 $\pm$ 0.91& 87.42 $\pm$ 0.48\\
Gidaris \cite{gidaris2018dynamic}  & ResNet-15 & 55.45 $\pm$ 0.89 & 70.13 $\pm$ 0.68&-&-&-&-\\
SNAIL \cite{mishra2018simple} & ResNet-15 & 55.71 $\pm$ 0.99 & 68.88 $\pm$ 0.92&-&-&-&-\\
TADAM \cite{oreshkin2018tadam}  & ResNet-15 & 58.50 $\pm$ 0.30 & 76.70 $\pm$ 0.30&-&-&-&-\\
Transductive \cite{dhillon2019baseline}$^{\dag}$ & ResNet-12 & 62.35 $\pm$ 0.66 & 74.53 $\pm$ 0.54&-&-&-&-\\
MetaoptNet \cite{lee2019meta}  & ResNet-18 & 62.64 $\pm$ 0.61 & 78.63 $\pm$ 0.46&65.99 $\pm$ 0.72 & 81.56 $\pm$ 0.53&-&-\\
TPN \cite{liu2018learning}$^{\dag}$& ResNet-12 & 53.75 $\pm$ 0.86 & 69.43 $\pm$ 0.67 & 57.53 $\pm$ 0.96 & 72.85 $\pm$ 0.74 & - & - \\
TEAM \cite{qiao2019transductive}$^{\dag}$ & ResNet-18 & 60.07 $\pm$ 0.59 & 75.90 $\pm$ 0.38 & - & - & 80.16 $\pm$ 0.52 & 87.17 $\pm$ 0.39 \\
CAN+T \cite{hou2019cross}$^{\dag}$& ResNet-12 & 67.19 $\pm$ 0.55 & 80.64 $\pm$ 0.35& 73.21 $\pm$ 0.58 & 84.93 $\pm$ 0.38&-&-\\
$^*$SimpleShot \cite{wang2019simpleshot}$^{\dag}$& ResNet-18 & 62.86 $\pm$ 0.20 & 79.22 $\pm$ 0.14 & 69.71 $\pm$ 0.23 & 84.13 $\pm$ 0.17 & 72.86 $\pm$ 0.20 & 88.57 $\pm$ 0.11\\
$^*$LaplacianShot\cite{ziko2020laplacian}$^{\dag}$ & ResNet-18 & 70.46 $\pm$ 0.23 & 81.76 $\pm$ 0.14 & 76.90 $\pm$ 0.25 & 85.10 $\pm$ 0.17 & 82.92 $\pm$ 0.21 & 90.77 $\pm$ 0.11 \\
$^*$EPNet\cite{rodriguez2020embedding}$^{\dag}$ & ResNet-18 & 63.83 $\pm$ 0.20 & 77.98 $\pm$ 0.15 & 70.08 $\pm$ 0.23 & 82.11 $\pm$ 0.18 & 73.32 $\pm$ 0.21 & 87.55 $\pm$ 0.13\\
EASE+Soft K means\cite{zhu2022ease}$^{\dag}$ & ResNet-12 & 57.00 $\pm$ 0.26 & 75.07 $\pm$ 0.21 & 69.74 $\pm$ 0.31 & 85.17 $\pm$ 0.21 & 76.72 $\pm$ 0.27 & 90.04 $\pm$ 0.16\\
Diff-ResNet(ours)$^{\dag}$ & ResNet-18 & \textbf{71.11} $\pm$ 0.24 & \textbf{82.07} $\pm$ 0.14 & \textbf{77.98} $\pm$ 0.25 & \textbf{85.75} $\pm$ 0.17 & \textbf{84.20} $\pm$ 0.21 & \textbf{91.12} $\pm$ 0.10\\
\hline
Qiao \cite{qiao2018few}  & WRN & 59.60 $\pm$ 0.41 & 73.74 $\pm$ 0.19&-&-&-&-\\
LEO \cite{rusu2018meta} & WRN & 61.76 $\pm$ 0.08 &77.59 $\pm$ 0.12& 66.33 $\pm$ 0.05 & 81.44 $\pm$ 0.09&-&-\\
ProtoNet \cite{snell2017prototypical} & WRN & 62.60 $\pm$ 0.20 & 79.97 $\pm$ 0.14&-&-&-&-\\
CC+rot \cite{gidaris2019boosting}  & WRN & 62.93 $\pm$ 0.45 & 79.87 $\pm$ 0.33& 70.53 $\pm$ 0.51 & 84.98 $\pm$ 0.36&-&-\\
MatchingNet \cite{vinyals2016matching} & WRN & 64.03 $\pm$ 0.20 & 76.32 $\pm$ 0.16&-&-&-&-\\
FEAT \cite{ye2020fewshot} & WRN & 65.10 $\pm$ 0.20 & 81.11 $\pm$ 0.14& 70.41 $\pm$ 0.23 & 84.38 $\pm$ 0.16&-&-\\
Transductive \cite{dhillon2019baseline}$^{\dag}$ & WRN & 65.73 $\pm$ 0.68 & 78.40 $\pm$ 0.52& 73.34 $\pm$ 0.71 & 85.50 $\pm$ 0.50&-&-\\
BD-CSPN \cite{liu2019prototype}$^{\dag}$ & WRN & 70.31 $\pm$ 0.93 & 81.89 $\pm$ 0.60& 78.74 $\pm$ 0.95 & 86.92 $\pm$ 0.63&-&-\\
$^*$SimpleShot \cite{wang2019simpleshot}$^{\dag}$ & WRN & 65.20 $\pm$ 0.20 & 81.28 $\pm$ 0.14 & 71.49 $\pm$ 0.23 & 85.51 $\pm$ 0.16 & 78.62 $\pm$ 0.19 & 91.21 $\pm$ 0.10 \\
$^*$LaplacianShot\cite{ziko2020laplacian}$^{\dag}$ & WRN & 72.90 $\pm$ 0.23 & 83.47 $\pm$ 0.14 & 78.79 $\pm$ 0.25 & 86.46 $\pm$ 0.17 & \textbf{87.70} $\pm$ 0.18 & 92.73 $\pm$ 0.10\\
$^*$EPNet\cite{rodriguez2020embedding}$^{\dag}$ &  WRN & 67.09 $\pm$ 0.21 & 80.71 $\pm$ 0.14 & 73.20 $\pm$ 0.23 & 84.20 $\pm$ 0.17 & 80.88 $\pm$ 0.20 & 91.40 $\pm$ 0.11\\
PT+NCM\cite{hu2021leveraging}& WRN & 65.35 $\pm$ 0.20 & 83.87 $\pm$ 0.13 & 69.96 $\pm$ 0.22 & 86.45 $\pm$ 0.15 & 80.57 $\pm$ 0.20 & 91.15 $\pm$ 0.10\\
EASE+Soft K means\cite{zhu2022ease}$^{\dag}$ & WRN & 67.42 $\pm$ 0.27 & \textbf{84.45} $\pm$ 0.18 & 75.87 $\pm$ 0.29 & \textbf{85.17} $\pm$ 0.21 & 81.01 $\pm$ 0.26 & 91.44 $\pm$ 0.14\\
Diff-ResNet(ours)$^{\dag}$ & WRN & \textbf{73.47} $\pm$ 0.23 & 83.86 $\pm$ 0.14 & \textbf{79.74} $\pm$ 0.25 & \textbf{87.10} $\pm$ 0.16 & \textbf{87.74} $\pm$ 0.19 & \textbf{92.96} $\pm$ 0.09\\
\bottomrule
\end{tabular}
\end{small}
\end{center}
\end{table*}

At the end of the section, we want to emphasize that our Diff-ResNet can achieve state-of-the-art. For fair comparison, we adopt tricks used in LaplacianShot~\cite{ziko2020laplacian}, which we elaborate in Appendix E.3. We also remove the balanced class assumption in \cite{hu2021leveraging,zhu2022ease} and report the corresponding results, as our Diff-ResNet does not utilize such assumption. We randomly sample 10000 5-way-1-shot and 5-way-5-shot classification tasks and report the average accuracy and corresponding 95\% confidence interval in Table~\ref{table:fewshot}. The results of networks for comparison in Table~\ref{table:fewshot} are collected from \cite{chen2019closerfewshot,wang2019simpleshot,ziko2020laplacian}. In all datasets with various backbones, Diff-ResNet obtains the highest classification accuracy, except on 5-shot task on \textit{mini}ImageNet with WRN-28-10 as backbone. The improvement of Diff-ResNet on 5-shot tasks is not as significant as on 1-shot tasks, which is consistent with the observation in Table~\ref{table:ablation}. As for the relatively small performance increase compared to LaplacianShot\cite{ziko2020laplacian}, we have thoroughly investigated the difference between our method and theirs in tha ablation study~\ref{table:ablation}(Internal Convection-Diffusion v.s. Diffusion).

Additionally, we investigate the computational cost of diffusion mechanism. The implementation of diffusion layers is simply small-scale matrix multiplication, which is very efficient using GPU. We run 1000 classification tasks using Diff-ResNet with different number of diffusion layers $r$, and report the average computation time per task and accuracy in Figure~\ref{fig:acc_time}. The total diffusion strength is fixed as $r\gamma=2.0$, except when $r=1$ we choose $\gamma=1.0$ for stability. We use a single GeForce RTX 2080 Ti to collect the running time. As stated before in Figure~\ref{fig:acc_step}, with fixed diffusion strength, there is no need of stacking too many layers. In both subfigures, 2 diffusion layers can already achieve the best result, whereas the increase in time compared to no diffusion is approximately 25\%. Moreover, the time of 10 diffusion layers roughly doubles that of without diffusion, and 10 layers is enough to achieve desired diffusion strength in all few-shot tasks. 

However, compared to methods~\cite{ziko2020laplacian,wang2019simpleshot,rodriguez2020embedding}, our method requires more time. In each few-shot task, at the inference phase, our method need to train a tiny neural network. Notice that we are not trying to retrain or finetune the backbone network. Rather, we train a tiny network that only contains two hidden layers and a diffusion block. The acceleration of inference will be studied in future.

\begin{figure}[hbtp]
	\centering
	\subfloat[ResNet-18]{\includegraphics[width=0.5\linewidth]{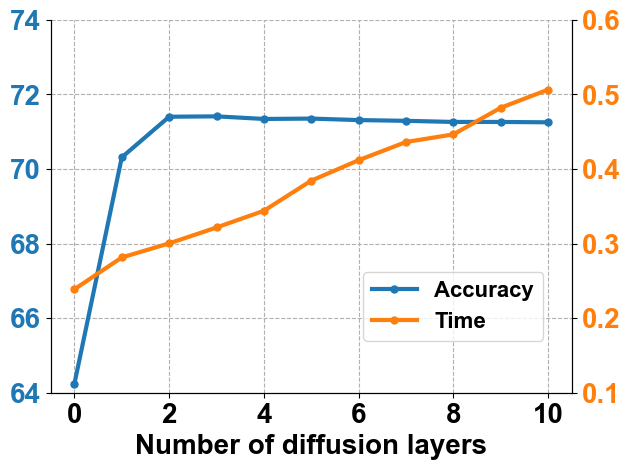}}
	\hfill
	\subfloat[WRN]{\includegraphics[width=0.5\linewidth]{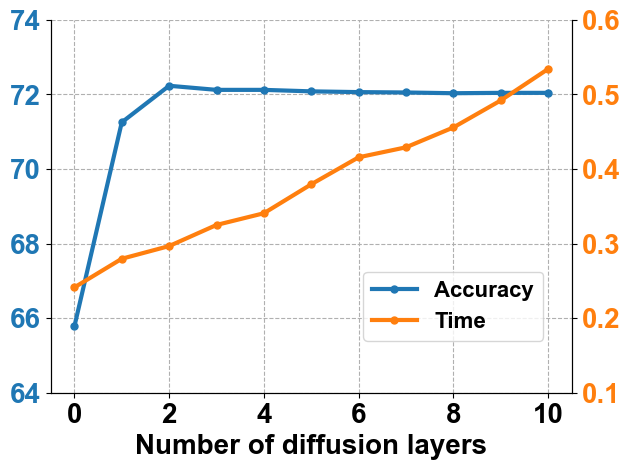}}
	
	\caption{The computation time of each task and accuracy on \textit{mini}ImageNet with ResNet-18 and WRN as backbone. The x-axis represents number of diffusion layers, 0 means no diffusion, right y-axis is the computation time (seconds) of each task, left y-axis is the average accuracy.}
	\label{fig:acc_time}
\end{figure}

\section{Conclusion}
In this paper, inspired by the ODE model with diffusion mechanism, we propose a novel Diff-ResNets by adding a simple yet powerful diffusion layer to the residual blocks. We conduct theoretical analysis of the diffusion mechanism and prove that the diffusion term will significantly increase the ratio between local intra-class distance and inter-class distance. The performance of proposed Diff-ResNets is verified by extensive experiments on few-shot learning and semi-supervised graph learning problems. Our future work involves the robustness of diffusion mechanism, the acceleration of Diff-ResNet in inference phase of few-shot learning, the extension from diffusive ODE to diffusive PDE, and the effect of diffusion in semi-supervised learning with extremyly low label rate.

\bibliographystyle{IEEEtran}
\bibliography{reference.bib}

\begin{IEEEbiography}[{\includegraphics[width=1in,height=1.2in,clip,keepaspectratio]{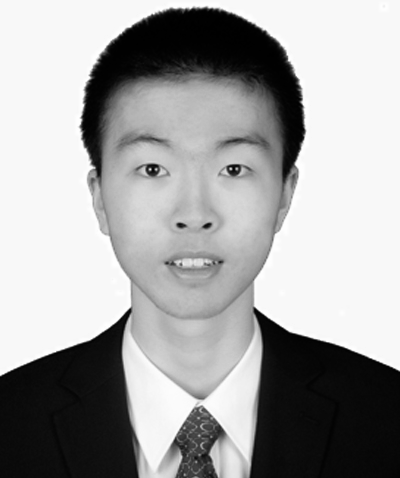}}]{Tangjun Wang} is a Ph.D. student in Department of Mathematical Sciences, Tsinghua University supervised by Zuoqiang Shi. His research interests include semi-supervised learning and applications of partial differential equation in neural networks.
\end{IEEEbiography}

\begin{IEEEbiography}[{\includegraphics[width=1in,height=1.2in,clip,keepaspectratio]{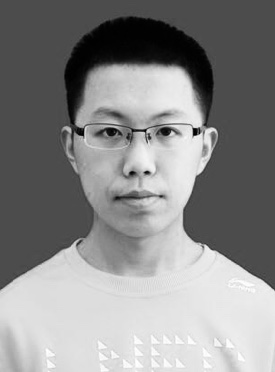}}]{Zehao Dou} is a Ph.D. student in Department of Statistics and Data Science, Yale University supervised by Harry Zhou and John Lafferty. His research interests include statistics, optimization, machine learning theory and reinforcement learning theory. 
\end{IEEEbiography}

\begin{IEEEbiography}[{\includegraphics[width=1in,height=1.2in,clip,keepaspectratio]{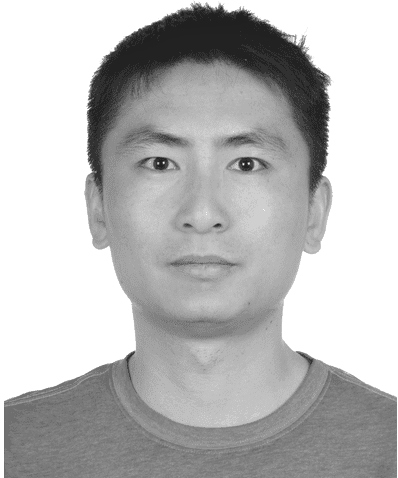}}]{Chenglong Bao} is an assistant professor in Yau mathematical sciences center, Tsinghua University and Yanqi Lake Beijing Institute of Mathematical Sciences and Applications. He received his Ph.D. from department of mathematics, National University of Singapore in 2014. His main research interests include mathematical image processing, large scale optimization and its applications.
\end{IEEEbiography}

\begin{IEEEbiography}[{\includegraphics[width=1in,height=1.2in,clip,keepaspectratio]{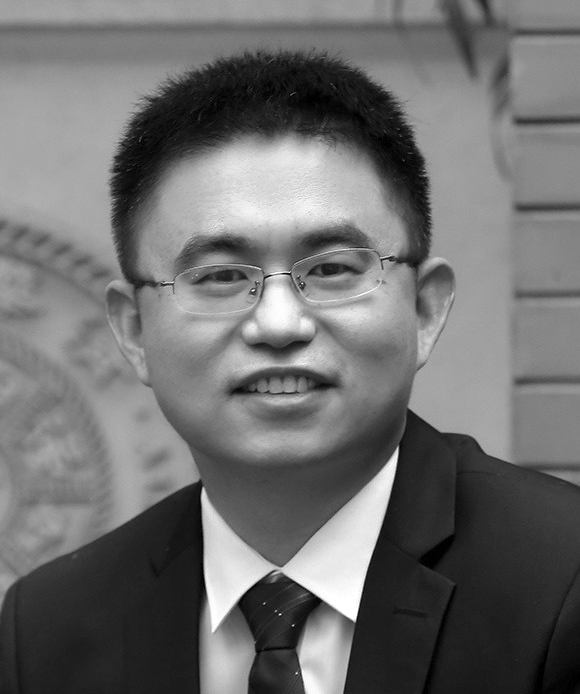}}]{Zuoqiang Shi} is an associate professor in Department of Mathematical Sciences, Tsinghua University. He received his Ph.D. from Zhou Pei-Yuan center for Applied Mathematics, Tsinghua University in 2008. His main research interests include numerical methods of partial differential equations and its applications, mathematical image processing.
\end{IEEEbiography}

\vfill

\appendices
\section{Proof of Stability Condition}
\label{section:stability}
In this section, we will give the stability condition of discretization of the diffusion step. When the convection term equals zero, the forward Euler discretization of \eqref{eq:diffusion_step} is 
\[x_i^{k+1} = x_i^k - \gamma\sum_{j=1}^N w_{ij}(x_i^k-x_j^k),~i=1,2\ldots,N.\]
$\Delta t $ is again absorbed in $\gamma$. Define $X^k=[x_1^k,\ldots,x_N^k]$, the vectorized update scheme is
\begin{equation}
	\label{eq:update}
	X^{k+1} = X^k - \gamma(\Lambda-W) X^k,
\end{equation}
where $\Lambda = \mathrm{diag}(d_i)$ with $d_i=\sum_{j=1}^Nw_{ij}$ for all $i=1,2,\ldots,N$. We demand all the $\{d_i\}$ share the same value $d_1=d_2=\cdots=d_N:=d$. The next proposition shows the stability condition of iteration \eqref{eq:update}.
\begin{Proposition}
	\label{prop:sta}
	If $\gamma\in\left(0,\frac{1}{d}\right]$, the iteration \eqref{eq:update} is a contraction.
\end{Proposition}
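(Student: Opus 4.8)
The plan is to recognize the vectorized update as left multiplication by the symmetric matrix $B := I - \gamma(\Lambda - W)$ and then control its spectrum. First I would use the equal-degree hypothesis $d_1 = \cdots = d_N = d$ to write $\Lambda = d I$, so that $B = (1-\gamma d) I + \gamma W$. Because the weight matrix $W$ is symmetric with nonnegative entries and constant row sum $d$, $B$ is symmetric, hence its operator $2$-norm equals its spectral radius, and it suffices to locate the eigenvalues of $B$ inside $[-1,1]$.

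Next I would bound the spectrum of $W$. By Gershgorin's disc theorem every eigenvalue $\mu$ of $W$ satisfies $|\mu - w_{ii}| \le \sum_{j\ne i} w_{ij} = d - w_{ii}$, hence $\mu \in [\,2w_{ii}-d,\; d\,] \subseteq [-d, d]$ since $w_{ii}\ge 0$. Equivalently, the Laplacian $\Lambda - W = dI - W$ is positive semidefinite with eigenvalues in $[0, 2d]$. Therefore the eigenvalues of $B$ are exactly $1 - \gamma\lambda$ with $\lambda \in [0, 2d]$, so they lie in $[\,1 - 2\gamma d,\; 1\,]$.

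Then, imposing $\gamma \in (0, 1/d]$ gives $1 - 2\gamma d \ge -1$, so every eigenvalue of $B$ lies in $[-1,1]$; since $B$ is symmetric this yields $\|B\|_2 \le 1$, and consequently $\|X^{k+1} - Y^{k+1}\| = \|B(X^k - Y^k)\| \le \|X^k - Y^k\|$ for any two trajectories, i.e.\ the iteration is non-expansive. If a strictly contractive estimate is desired, I would restrict $B$ to the subspace orthogonal to the all-ones vector — legitimate when the graph is connected, which is the standing assumption in Proposition~\ref{prop:1} — on which the Laplacian has smallest eigenvalue $\lambda_2 > 0$; there $B$ has operator norm $\max\{\,|1-\gamma\lambda_2|,\, |1-\gamma\lambda_N|\,\} < 1$ for $\gamma \in (0, 1/d]$.

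The only nonroutine point is the upper bound $\lambda \le 2d$ on the Laplacian spectrum; everything else is bookkeeping. This is where the nonnegativity and symmetry of $W$ together with the equal-degree normalization are essential — without them $B$ need not be symmetric and the clean step-size threshold $1/d$ would not follow from a one-line Gershgorin argument.
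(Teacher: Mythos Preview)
Your argument is correct and follows essentially the same route as the paper: both show the symmetric matrix $I-\gamma(\Lambda-W)$ has all eigenvalues in $[-1,1]$ via Gershgorin's theorem together with the constraint $\gamma\le 1/d$, the only cosmetic difference being that you apply Gershgorin to $W$ and then shift, whereas the paper applies it directly to $A=I-\gamma(\Lambda-W)$. Your added remark that this yields only non-expansiveness (the paper in fact shows $\rho(A)=1$) and your sketch of strict contraction on $\mathbf{1}^{\perp}$ go slightly beyond the paper's own proof; note, however, that at the endpoint $\gamma=1/d$ one can have $|1-\gamma\lambda_N|=1$ when $\lambda_N=2d$ (e.g.\ a bipartite $d$-regular component), so the strict inequality there requires either $\gamma<1/d$ or ruling out that spectral edge case.
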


\begin{proof}
Matrix with spectral radius smaller than 1 is a contraction matrix, since
\[\|Ax\|_2\leqslant\|A\|_2\|x\|_2\leqslant\|x\|_2\]
for any $x$ if $\|A\|_2=\rho(A)\leqslant1$.

Thus, we are trying to prove that $\rho(I-\gamma(\Lambda-W))\leqslant 1$. In fact, we will show that $\rho(I-\gamma(\Lambda-W))=1$. Denote $A=I-\gamma(\Lambda-W)$

First, it is obvious that 1 is an eigenvalue of $A$. $\Lambda-W$ is a matrix with row sum 0 by construction of $\Lambda$, thus 0 is an eigenvalue of $\Lambda-W$ with $\boldsymbol{1}$ its correspoding eigenvector. Therefore, 1 is an eigenvalue of $A=I-\gamma(\Lambda-W)$.

Then, as $A$ is obviously a symmetric matrix, all of its eigenvalues $\lambda$ are real. We will show that all the eigenvalues of $A$ lies in $[-1,1]$ using the Gershgorin disk theorem. In the $i$-th row of $A$, the diagonal entry is $1-\gamma(d-w_{ii})$, and the off-diagonal entries are $\gamma w_{ij}$ for all $j\neq i$. Thus, the disk decided by this row is centered at $1-\gamma(d-w_{ii})$, with radius $\sum_{j\neq i}\gamma w_{ij}$. Remind again that $d=\sum_{j}w_{ij}$, so $\sum_{j\neq i}\gamma w_{ij}=\gamma(d-w_{ii})$.  Using $\gamma\in\left(0,\frac{1}{d}\right)$, we have
\[
\begin{aligned}
	-1&\leqslant 1-2\gamma d\\
	&\leqslant 1-2\gamma d+2\gamma w_{ii}\\
	&=1-\gamma (d-w_{ii})-\gamma (d-w_{ii})\\
	&\leqslant \lambda\\
	&\leqslant 1-\gamma (d-w_{ii})+\gamma (d-w_{ii})=1\\
\end{aligned}
\]
The second inequality is because all entries in $W$ are non-negative. In conclusion, the spectral radius of $A$ is 1.
\end{proof}
In the algorithm, weight matrix is symmetrically normalized, thus $d=O(1)$. Therefore, in our batch diffusion mechanism, once the step size meets a relatively loose constraint, the stability can be guaranteed.
\section{Proof of Theorem 1}
We write the weight of the second layer as $a_{t}^{(i)}=\lambda_{t}^{(i)}\cdot\beta_{t}^{(i)}$ for proof convenience. $\lambda_{t}^{(i)}\in \mathbb{R}, \beta_{t}^{(i)}\in \mathbb{R}^d$. Then
\[f(x(t),\theta(t))=\sum_{i=1}^{w}\lambda_{t}^{(i)}\sigma(\w_{t}^{(i)}\cdot x(t)+b_{t}^{(i)})\beta_{t}^{(i)}\]

First of all, let us deal with the simplest situation. Each region $S_{i,j},~(i,j)\in [k]\times[l]$ only has 1 point, and the width of the 2-layer network is also 1.
\begin{Lemma}
	\label{lemma:1}
	Assume: \[f(x(t),\theta(t))=\lambda_{t}\sigma(\w_{t}\cdot x(t)+b_{t})\beta_{t}\]

	Then, given the dataset $\{(x_i,y_i)\}_{i=1}^N$, we can construct the function $f$ above with $2N+O(d)$ different variables and $N$ layers, so that the final-step features of these data points are linear separable.
\end{Lemma}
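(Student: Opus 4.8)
\noindent\textit{Proposed proof.} The plan is to realise the flow as $N$ piecewise-constant layers and, to keep the number of distinct parameters at $2N+O(d)$ rather than $O(Nd)$, to reuse a single pair of direction vectors across all layers. First I would fix a generic unit vector $u\in\mathbb{R}^d$ (so that the scalars $\alpha_i:=u\cdot x_i$ are pairwise distinct) and a unit vector $v\in u^{\perp}$, partition $[0,1]$ into equal subintervals $I_1,\dots,I_N$, and on $I_k$ set the (piecewise-constant) parameters to $\w_t=u$, $\beta_t=v$, $b_t=b_k$, $\lambda_t=\lambda_k$, so that the dynamics on $I_k$ is $\dot x=\lambda_k\,\sigma(u\cdot x+b_k)\,v$. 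The mechanism that makes the layers compose cleanly is the orthogonality $u\cdot v=0$: it gives $\tfrac{d}{dt}(u\cdot x)=0$ along the flow, so on each subinterval an ``active'' point ($u\cdot x+b_k>0$) keeps its $u$-coordinate, remains active, and is merely translated at constant velocity in the direction $v$, while an ``inactive'' point does not move. The only quantities that change from layer to layer are the two scalars $b_k$ and $\lambda_k$, which together with $u$ and $v$ account for $2N+O(d)$ variables across the $N$ layers.

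Next I would organise the construction around the order induced by $u$: relabel the data so that $\alpha_{(1)}>\dots>\alpha_{(N)}$, and choose the thresholds with $-b_k=:\tau_k\in(\alpha_{(k+1)},\alpha_{(k)})$ (put $\alpha_{(N+1)}:=-\infty$). Then on $I_k$ the active set is exactly the $k$ highest-ranked points $\{x_{(1)},\dots,x_{(k)}\}$, and --- this is the whole point of insisting $v\perp u$ --- since active points only ever move inside $u^{\perp}$, the numbers $\alpha_{(r)}$, the induced order, and hence the active/inactive pattern on every later layer are preserved along the entire flow. Consequently the rank-$r$ point is active precisely on $I_r,\dots,I_N$, and adding up the constant-velocity translations gives its net displacement
\[ x_{(r)}(1)-x_{(r)}=C_r\,v,\qquad C_r=\tfrac1N\sum_{k=r}^{N}\lambda_k\,(\alpha_{(r)}-\tau_k). \]

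Finally I would check realisability and separability. The map $(\lambda_N,\dots,\lambda_1)\mapsto(C_N,\dots,C_1)$ is triangular: $C_r$ depends only on $\lambda_r,\dots,\lambda_N$, with the coefficient of $\lambda_r$ equal to $\tfrac1N(\alpha_{(r)}-\tau_r)\neq 0$; solving from $r=N$ downward therefore realises \emph{any} prescribed values $C_1,\dots,C_N$. I then pick any $w^{\ast}$ with $w^{\ast}\cdot v=1$ and set $C_r:=1-w^{\ast}\cdot x_{(r)}$ for the points of the first class and $C_r:=-1-w^{\ast}\cdot x_{(r)}$ for the second, so that $w^{\ast}\cdot\bigl(x_{(r)}+C_r v\bigr)=\pm1$ according to the label and the hyperplane $\{w^{\ast}\cdot x=0\}$ linearly separates the final-step features (for $k>2$ classes one sends the classes to disjoint intervals along $v$ in the same way). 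I expect the genuine difficulty to be purely structural rather than computational: getting the $u\perp v$ device to guarantee that the active set stays nested and the inactive points stay frozen, so that the $N$ layers really do compose in the claimed way, and then arranging that only two scalars per layer are needed so that the count is $2N+O(d)$ and not $O(Nd)$.
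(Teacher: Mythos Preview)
Your proposal is correct and follows essentially the same approach as the paper: fix constant directions $u\perp v$ so that the $u$-coordinate is conserved along the flow, choose piecewise-constant thresholds $b_k$ so that the active set is nested, and then solve the resulting triangular linear system for the scalars $\lambda_k$, giving exactly the $2N+O(d)$ parameter count. The only inessential difference is the nesting direction: the paper orders the points increasingly in $u\cdot x_i$ and lets the active set \emph{shrink} (each point is moved to its target and then frozen in subsequent layers), whereas you order decreasingly and let the active set \emph{grow}; both yield an invertible triangular system and the same conclusion.
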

\begin{proof}
Given $N$ data points $x_1,x_2,\cdots, x_N\in R^{d}$ with their corresponding labels.

\vbox{}

1) There exists $\w\in R^{d}$, such that $\w\cdot x_{i},~i\in [N]$ are all distinct. This is obvious since $\mathbf{W}_{ij}=\{\w|\w\cdot x_{i}=\w\cdot x_{j}\}$ forms a hyperplane of $R^{d}$, which has 0 measure. Therefore their finite union
\[K = \mathop{\bigcup}\limits_{1\leq i<j\leq N} \mathbf{W}_{ij}\]
also has 0 measure. We only need to pick a $\w^*\in R^{d}\setminus K$, so that this $\w^*$ meets our need. Let $\w_{t}$ be a constant value function and $\w_{t}\equiv \w^*$ holds for all $t\in [0,1]$. 
Since the dot products $\w^*\cdot x_{i}$ are pairwise distinct, we can assume:
\[\w^*\cdot x_{1}<\w^*\cdot x_{2}<\cdots<\w^*\cdot x_N\]
Denote $A_{i}=\w^*\cdot x_{i}$ and then $A_{1}<A_{2}<\cdots<A_N$

\vbox{}

2) Without loss of generality, we assume the $d$-th component $\w^*_{d}\neq 0$, then we denote:
\[\beta^*=\left(1,1,\cdots,1,-\frac{\w^*_{1}+\w^*_{2}+\cdots+\w^*_{d-1}}{\w^*_{d}}\right)\]
Let $\beta_{t}$ also be a constant value function and $\beta_{t}\equiv \beta^*$. Then we have: $\w^*\perp\beta^*$ and our main function $f$ becomes:
\[x'(t)=f(x(t),\theta(t))=\lambda_{t}\sigma(\w^*\cdot x(t)+b_{t})\beta^*\]
We notice that:
\begin{align*}
(\w^*\cdot x(t))'&=\w^*\cdot(\lambda_{t}\sigma(\w^*\cdot x(t)+b_{t})\beta^*)\\
&=\lambda_{t}\sigma(\w^*\cdot x(t)+b_{t})\cdot (\w^*\cdot\beta^*)=0
\end{align*}
which means during the flow, $\w^*\cdot x(t)$ remains constant. Thus
\[\w^*\cdot x_{i}(t)\equiv \w^*\cdot x_{i}(0)=\w^*\cdot x_{i}=A_{i}~~\forall t\in [0,1],i\in[N]\]
and $x_{i}'(t)=\lambda_{t}\sigma(A_{i}+b_{t})\beta^*$. Using the definition of ReLU activation function , when $A_{i}+b_{t}<0$, $x_{i}(t)$ remains unchanged.

\vbox{}

3) Pick $N$ real numbers $B_{1},B_{2},\cdots,B_N$ such that:
\[B_{1}<A_{1}<B_{2}<A_{2}<\cdots<B_N<A_N\]
Now we construct the time-dependent scalar $b_{t}$:
\[b_{t}=-B_{i}, ~\forall t\in\left[\frac{i-1}{N},\frac{i}{N}\right), i\in [N]\]
As we can see, $b_{t}$ is piecewise constant with $N$ pieces, or so-called layers.

\vbox{}

4) Finally, we construct suitable time-dependent scalar $\lambda_{t}$ to make the final-step features $x_{i}(1),~i\in [N]$ linear separable. Pick 2 real numbers $C_{1}<C_{2}$. We will choose a suitable position for $x_{i}(1)$ one by one. To be concrete, we will make the first component of $x_{i}(1)$:
\[(x_{i}(1))_{1}=C_{y_{i}}\]
Here $y_{i}\in \{1,2\}$ is the corresponding label of data point $x_{i}$.

Notice that, for each $j\in [N-1]$, when $t>\frac{j}{N}$, $x_{1}(t),x_{2}(t),\cdots,x_{j}(t)$ remains constant, because according to our construction of $b_{t}$ and sequence $\{B_{i}\}$, $A_{i}+b_{t}\leq A_{j}-B_{j+1}<0$ when $t>\frac{j}{N}$ and $ i\leq j$.

In the first time step, $t\in[0,\frac{1}{N})$, data point $x_{1}(0)=x_{1}$ has its corresponding label $y_{1}\in \{1,2\}$. Since $b_{t}=-B_{1}$
\[x_{1}'(t)=\lambda_{t}(A_{1}-B_{1})\beta^*\]
Make
\[\lambda_{t}\equiv\lambda_{1}=\frac{C_{y_{1}}-(x_{1}(0))_{1}}{A_{1}-B_{1}}\cdot N\quad\forall t\in\left[0,\frac{1}{N}\right)\]
Then
\begin{align*}
	x_{1}(1)&=x_{1}(\frac{1}{N})\\
	&=x_{1}(0)+\frac{1}{N}\lambda_{1}(A_{1}-B_{1})\beta^*\\
	&=x_{1}(0)+(C_{y_{1}}-(x_{1}(0))_{1})\beta^*
\end{align*}
Note that the first component of $\beta^*$ is 1, so $(x_{1}(1))_{1}=C_{y_{1}}$. 

Similarly, in the $j$-th time step, $t\in \left[\frac{j-1}{N},\frac{j}{N}\right)$ and $b_{t}=-B_{j}$. We choose a suitable position for $x_{j}(1)$. Make
\[\lambda_{t}\equiv \lambda_{j}=\frac{C_{y_{j}}-(x_{j}(\frac{j-1}{N}))_{1}}{A_{j}-B_{j}}\cdot N\quad\forall t\in\left[\frac{j-1}{N},\frac{j}{N}\right)\]
so that:
\begin{align*}
	x_{j}(1)&=x_{j}\left(\frac{j}{N}\right)\\
	&=x_{j}\left(\frac{j-1}{N}\right)+\frac{1}{N}\lambda_{j}(A_{j}-B_{j})\beta^*\\
	&=x_{j}\left(\frac{j-1}{N}\right)+(C_{y_{j}}-\left(x_{j}\left(\frac{j-1}{N}\right)\right)_{1})\beta^*
\end{align*}
and then its first component $(x_{j}(1))_{1}=C_{y_{j}}$.

To sum up, $\lambda_{t}$ is piecewise constant with $N$ pieces. $\lambda_{t}=\lambda_{j}$ when $t\in[\frac{j-1}{N},\frac{j}{N})$. Here:
\[\lambda_{j}=\frac{C_{y_{j}}-(x_{j}(\frac{j-1}{N}))_{1}}{A_{j}-B_{j}}\cdot N\qquad\forall t\in\left[\frac{j-1}{N},\frac{j}{N}\right)\]
After all these time steps, we can guarantee that for each $i\in[N]$,
\[(x_{i}(1))_{1}=C_{y_{i}}\]
In other words, final-step features with the same corresponding label $y_i\in \{1,2\}$ are on the same hyperplane, vectors with fixed first component $\{\mathbf{v}:(\mathbf{v})_{1}=C_{y_i}\}$. Therefore, it is obvious that they can be easily separated by a hyperplane
\[\Big\{\mathbf{v}:(\mathbf{v})_{1}=\frac{C_{1}+C_{2}}{2}\Big\}\]
which meets our satisfaction. In this construction of function $f$, $\w_{t},\beta_{t}$ remains constant, and $\lambda_{t},b_{t}$ changes every time step. In all, there are $2N+O(d)$ variables.
\end{proof}

\vbox{}

In the setting above, the width of this network is 1. Now we will deal with a slightly harder situation: network with width $w$. And our next lemma can perfectly answer this question: it's possible to use fewer layers (but same number of parameters) to make the final features linear separable when using wider network. 

\begin{Lemma}
	\label{lemma:2}
	Assume: \[f(x(t),\theta(t))=\sum_{i=1}^{w}\lambda_{t}^{(i)}\sigma(\w_{t}^{(i)}\cdot x(t)+b_{t}^{(i)})\beta_{t}^{(i)}\]
	Here $w$ is the width of the network.
	
	Then, given the dataset $\{(x_i,y_i)\}_{i=1}^N$, we can construct the function $f$ above with $2N+O(d)$ different variables and $\lceil N/w \rceil$ layers, so that the final-step features of these data points are linear separable.
\end{Lemma}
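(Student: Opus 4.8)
\emph{Proof proposal.} The plan is to reuse wholesale the scaffolding built in the proof of Lemma~\ref{lemma:1} and change only how time is spent. As there, I would fix a single direction $\w^{*}$ making the numbers $A_{i}:=\w^{*}\cdot x_{i}$ pairwise distinct, relabel so that $A_{1}<A_{2}<\cdots<A_{N}$, fix $\beta^{*}\perp\w^{*}$ with first coordinate $1$, and keep $\w_{t}^{(i)}\equiv\w^{*}$, $\beta_{t}^{(i)}\equiv\beta^{*}$ for every neuron $i$ and every $t$, so that $\w^{*}\cdot x_{i}(t)\equiv A_{i}$ along the whole flow. I would also interleave thresholds $B_{1}<A_{1}<B_{2}<A_{2}<\cdots<B_{N}<A_{N}$ and pick target scalars $C_{1}<C_{2}$, the goal being to drive $(x_{i}(1))_{1}=C_{y_{i}}$, after which the final features sit on the two hyperplanes $\{v:v_{1}=C_{1}\}$ and $\{v:v_{1}=C_{2}\}$ and are separated by $\{v:v_{1}=(C_{1}+C_{2})/2\}$.

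Next I would partition the points into $\lceil N/w\rceil$ consecutive blocks $G_{g}=\{(g-1)w+1,\dots,\min(gw,N)\}$ in $A$-order, and devote the $g$-th time interval $[\tfrac{g-1}{\lceil N/w\rceil},\tfrac{g}{\lceil N/w\rceil})$ to placing block $G_{g}$: in that interval neuron $s$ uses bias $b_{t}^{(s)}=-B_{(g-1)w+s}$ (any leftover neurons in the final block are switched off by $\lambda^{(s)}=0$). Two elementary bookkeeping facts drive everything: (i) during interval $g$ every point already placed in an earlier block is inert, because for $m\le(g-1)w$ one has $A_{m}\le A_{(g-1)w+s-1}<B_{(g-1)w+s}$ for every $s\ge1$, so each neuron of this interval sees a negative pre-activation on $x_{m}$; and (ii) inside $G_{g}$ the point of local index $r$ is moved by exactly neurons $s=1,\dots,r$, since $A_{(g-1)w+r}>B_{(g-1)w+s}\iff r\ge s$. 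Points of later blocks also move during interval $g$, but — exactly as in Lemma~\ref{lemma:1} — they are re-placed in their own interval, so no harm is done.

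Fact (ii) converts the placement of $G_{g}$ into one linear system: writing $\delta_{r}$ for the first-coordinate displacement still required by the $r$-th point of the block (computed from its position at the start of the interval), I must solve
\[
\frac{1}{\lceil N/w\rceil}\sum_{s=1}^{r}\lambda^{(s)}\bigl(A_{(g-1)w+r}-B_{(g-1)w+s}\bigr)=\delta_{r},\qquad r=1,\dots,|G_{g}|,
\]
which is lower triangular with diagonal entries $A_{(g-1)w+r}-B_{(g-1)w+r}>0$, hence uniquely solvable for $\lambda^{(1)},\dots,\lambda^{(|G_{g}|)}$ by forward substitution. Running this for $g=1,\dots,\lceil N/w\rceil$ yields $(x_{i}(1))_{1}=C_{y_{i}}$ for all $i$, establishing linear separability; the variables used are $\w^{*}$ and $\beta^{*}$ contributing $O(d)$, the $N$ thresholds $B_{m}$, and the $N$ scalars $\lambda^{(\cdot)}$, i.e.\ $2N+O(d)$, spread over $\lceil N/w\rceil$ layers. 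The only point needing genuine care — as opposed to real difficulty — is the verification of the activation pattern (i)–(ii); the remaining ingredient is just the observation that a triangular system with nonzero diagonal is solvable, which is precisely what lets $w$ neurons do in a single layer what Lemma~\ref{lemma:1} did in $w$.
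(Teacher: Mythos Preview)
Your proposal is correct and follows essentially the same approach as the paper: fix the single direction $\w^{*}$ and the orthogonal $\beta^{*}$, split time into $\lceil N/w\rceil$ intervals, assign bias $-B_{(g-1)w+s}$ to neuron $s$ in interval $g$, and solve for the $\lambda$'s blockwise. You are in fact slightly more explicit than the paper, which merely asserts the resulting linear system is uniquely solvable, whereas you identify it as lower triangular with strictly positive diagonal; your device of zeroing out leftover neurons in the last block is equivalent to the paper's padding $B_{N+1}=\cdots=A_{N}+1$.
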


\begin{proof}
Exactly like Lemma \ref{lemma:1}, we let $\w_{t}^{(i)}\equiv \w^*$ and $\beta_{t}^{(i)}\equiv\beta^*$ for $\forall i\in[w]$. And we will construct suitable time-dependent scalars $\lambda_{t}^{i},b_{t}^{i}$ in order to make:
\[(x_{i}(1))_{1}=C_{y_{i}}\]
holds for all $i\in [N]$.

Define $B_{N+1}=B_{N+2}=\cdots=A_{N}+1$, and denote $L=\left\lceil\frac{N}{w}\right\rceil$. Before further analysis, we split the whole time period $[0,1]$ into $L$ equal time steps.

In the $j$-th time step, $t\in[\frac{j-1}{L},\frac{j}{L})$, let:
\[b_{t}^{(i)}\equiv-B_{(j-1)w+i}, ~\lambda_{t}^{(i)}\equiv \lambda_{(j-1)w+i}~~\forall i\in [w]\]
where $\lambda_{i}$ is undetermined.

Notice that each $b_{t}^{(i)}$ and $\lambda_{t}^{(i)}$ is piecewise constant with $L$ pieces. Similar to Lemma \ref{lemma:1}, after the $j$-th time step, $x_{1}(t),x_{2}(t),\cdots,x_{jw}(t)$ remains unchanged, because $\forall i\in[jw],t>\frac{j}{L}$, $A_{i}+b_{t}^{(j)}\leqslant A_{jw}-B_{jw+1}<0$. Therefore, our plan is to put $x_{(j-1)w+1}(1),x_{(j-1)w+2}(1),\cdots,x_{jw}(1)$ into suitable positions during the $j$-th time step, which means:
\begin{align*}
&C_{y_{i}}=\left(x_{i}\left(\frac{j-1}{L}\right)\right)_{1}+\frac{1}{L}\sum_{k=1}^{w}\lambda_{(j-1)w+k}\cdot\sigma(A_{i}-B_{(j-1)w+k})\\
&=\left(x_{i}\left(\frac{j-1}{L}\right)\right)_{1}+\frac{1}{L}\sum_{k=1}^{i-(j-1)w}\lambda_{(j-1)w+k}\cdot(A_{i}-B_{(j-1)w+k})\\
&\quad \qquad \qquad \qquad \qquad \qquad \qquad \qquad \qquad \forall (j-1)w<i\leqslant jw
\end{align*}

It is a linear system of equations, and we can solve these $\lambda_{i},~(j-1)w<i\leqslant jw$ through the linear functions above uniquely.

After all of these time steps, we can guarantee that:
\[(x_{i}(1))_{1}=C_{y_{i}}\]
holds for all $i\in[N]$. 

In this proceedings, we use only $L = \left\lceil\frac{N}{w}\right\rceil$ layers to meet our satisfaction. However, the number of variables we use is also $2N+O(d)$, which does not change with the increase of network width.
\end{proof}

\vbox{}

At last, we can deal with the original Theorem \ref{thm:1}, which has width $w$ and each region has several points. It is a simple extension of Lemma \ref{lemma:2}.
\begin{proof}
	For simplicity, we reorganize $M=kl$ subsets and renumber them as $\Gamma_{m}=S_{i,j},~m\in [M],~(i,j)\in [k]\times[l]$, as we do not have to distinguish whether regions are from the same class or different classes. Since they can be separated by a set of $M-1$ parallel hyperplanes, there exists a vector $\w^*\in \mathbb{R}^{d}$, such that the following intervals do not intersect with each other:
	\[\w^*\cdot \Gamma_{i}\triangleq\{\w^*\cdot x_{i}|x_{i}\in\Gamma_{i}\}~~i\in[M]\]
	We can assume $\forall x_{1}\in \Gamma_{1}, x_{2}\in \Gamma_{2},\cdots,x_{M}\in \Gamma_{M}$:
	\[\w^*\cdot x_{1}<\w^*\cdot x_{2}<\cdots<\w^*\cdot x_{M}\]	
	Therefore, there exists real numbers $B_{1}<A_{1}\leqslant B_{2}<A_{2}\leqslant\cdots\leqslant B_{M}<A_{M}$ such that $\w^*\cdot \Gamma_{i}\subseteq [B_{i},A_{i}]$. Just like the proof of Lemma \ref{lemma:2}, we let $\w_{t}^{(i)}\equiv \w^*$ and $\beta_{t}^{(i)}\equiv\beta^*$ for $\forall i\in[w]$. Here, without lack of generality, assume the $d$-th component of $\w^*$ is non-zero, and let
	\[\beta^*=\left(1,1,\cdots,1,-\frac{\w^*_{1}+\w^*_{2}+\cdots+\w^*_{d-1}}{\w^*_{d}}\right)\]
	
	After that, we can treat these $M$ regions just as $M$ data points. The only difference is: instead of making each final-step feature with the same corresponding label has the same first component, we make them in the same interval. 
	
	Pick 2 intervals $\C_{y_i}=[a_{y_i},b_{y_i}]$ such that: (1) $a_{1}<b_{1}<a_{2}<b_{2}$, therefore these two intervals don't intersect with each other. (2) $b_{y_i}-a_{y_i} > D$ holds for each $y_i\in\{1,2\}$. So that, each interval can hold a complete region $\Gamma_{i}$. Then we can use exactly the same way as Lemma \ref{lemma:2} to make $x_i(1)\subseteq \C_{y_i}$, and we only have to change the two real numbers $C_{1}<C_{2}$ into the two intervals above. In the end, we can separate the final-step feature regions with the following parallel hyperplane:
	\[\Big\{\mathbf{v}:(\mathbf{v})_{1}=\frac{d_{1}+c_{2}}{2}\Big\}\]
	
	Similar to Lemma \ref{lemma:2}, we use $L = \left\lceil\frac{M}{w}\right\rceil$ layers. Moreover, the number of variables we use decreases to $2M+O(d)$, which is a significant change.
\end{proof}
\section{Proof of Theorem 2}
\begin{proof}
Similar to the last subsection, we reorganize $M=kl$ subsets and renumber them as $\Gamma_{m}=S_{i,j},~m\in [M],~(i,j)\in [k]\times[l]$. According to the definition of upper bound of diameters $D$, assume $\Gamma_{i}\subseteq B(O_{i},D/2)$ and denote $R = D/2$. 

First we introduce some notations. Denote hypersphere in $d$-dimension with radius r as $S^{d-1}(r)$. If $r=1$, which is a unit hypersphere, we simply write $S^{d-1}$.  Denote $A(\omega)$ as the surface area of $\omega$. $\Gamma(x)$ is the Gamma Function.

The goal to find a unit normal vector $\w\in S^{d-1}$, such that $\forall~b\in R$, hyperplane $\w\cdot\x+b=0$ does not intersect with any two regions $\Gamma_{i}$ and $\Gamma_{j}$.

Denote:
\begin{align*}
	K_{ij}=\{\w:&\|\w\|_{2}=1,~\exists~b\in \mathbb{R}, s.t.~\text{hyperplane}~\w\x+b=0 \\
	&\text{intersects with both}~\Gamma_{i}~\text{and}~\Gamma_{j}\}
\end{align*}

Then we need to prove:
\begin{equation}
\label{eq:kij}
\bigcup\limits_{1\leqslant i<j\leqslant M}K_{ij}\subsetneqq S^{d-1}
\end{equation}
so that every unit normal vector $\w$ which doesn't belong to any $K_{ij}$ meets our satisfaction. In order to prove (\ref{eq:kij}), we compare the surface area of the two sides.

It is well known that on the right side
\[A(S^{d-1}) = \frac{2\pi^{d/2}}{\Gamma(\frac{d}{2})}\]

We are going to calculate the surface area of the left side. For any $i\neq j\in[M]$, we do the scaling as follows:
\begin{align*}
K_{ij} &=\{\w:\|\w\|_{2}=1,~\exists~b\in \mathbb{R}, s.t.~\text{hyperplane}~\w\x+b=0\\
&\qquad\text{intersects with both}~\Gamma_{i}~\text{and}~\Gamma_{j}\}\\
&\subseteq\{\w:\|\w\|_{2}=1,~\exists~b\in \mathbb{R}, s.t.~\text{distances from }~O_{i},O_{j}\\
&\qquad\text{to}~\w\x+b=0~\text{are at most}~R\}\\
&= \{\w:\|\w\|_{2}=1,~\exists~b\in \mathbb{R}, s.t.~|\w\cdot\OO_{i}+b|\leqslant R,\\
&\qquad|\w\cdot\OO_{j}+b|\leqslant R\}\\
&= \{\w:\|\w\|_{2}=1,~|\w\cdot\OO_{i}-\w\cdot\OO_{j}|\leqslant 2R\}\\
&\subseteq\{\w:\|\w\|_{2}=1,~|\w\cdot\hat{\overrightarrow{O_{i}O_{j}}}|\leqslant \frac{2R}{L}=\frac{D}{L}\}
\end{align*}

Here, $\hat{\overrightarrow{O_{i}O_{j}}}$ means the unit vector in the direction of $\overrightarrow{O_{i}O_{j}}$ and the last step above is because $\|O_{i}O_{j}\|\geqslant dist(\Gamma_{i},\Gamma_{j}) > L$. Denote $t = \frac{D}{L}, \e_{d}=(0,0,\cdots,0,1)$. By our assumption obviously $t<1$. Next we will calculate the surface area of $K_{ij}$.
\[A(K_{ij})\leqslant\int_{\w\in S^{d-1},|\w\cdot\hat{\overrightarrow{O_{i}O_{j}}}|\leqslant t}dS=\int_{\w\in S^{d-1},|\w\cdot \e_{d}|\leqslant t}dS\]

It is the surface area of a \emph{hyperspherical segment}: the solid defined by cutting a hypersphere with a pair of parallel planes $\{\mathbf{v}:(\mathbf{v})_d=t\}$ and $\{\mathbf{v}:(\mathbf{v})_d=-t\}$. It can also be seen as a complete sphere excluding upper and lower \emph{hyperspherical caps}. The area of a hypersherical cap in a $d$-dimensional sphere of radius $r$ can be obtained by integrating the surface area of an $(d-1)$-dimensional sphere of radius $rsin(\theta)$ with arc element $rd\theta$ over a great circle arc~\cite{li2011concise}. Here $r=1$, and $\theta$ is integrated over 0 to $\varphi=arccos(t)$, which is the colatitude angle, i.e., the angle between a vector of the sphere and its $d^{\text{th}}$ positive axis.
\begin{align*}
A(K_{ij})&\leqslant A(S^{d-1})-2\int_{0}^{\arccos(t)}A(S^{d-2}(\sin\theta))d\theta\\
&=A(S^{d-1})-2~\frac{2\pi^{(d-1)/2}}{\Gamma(\frac{d-1}{2})}\int_{0}^{\arccos(t)}\sin^{d-2}\theta d\theta
\end{align*}

It is obvious that when $t=0$, the hyperspherical segment is just the whole sphere, which means
\[A(S^{d-1})= 2~\frac{2\pi^{(d-1)/2}}{\Gamma(\frac{d-1}{2})}\int_{0}^{\frac{\pi}{2}}\sin^{d-2}\theta d\theta\]
Thus
\begin{align*}
A(K_{ij})&\leqslant 2~\frac{2\pi^{(d-1)/2}}{\Gamma(\frac{d-1}{2})}\int_{\arccos(t)}^{\frac{\pi}{2}}\sin^{d-2}\theta d\theta\\
&=A(S^{d-1})\frac{2\Gamma(\frac{d}{2})}{\sqrt{\pi}\Gamma(\frac{d-1}{2})}\int_{\arccos(t)}^{\frac{\pi}{2}}\sin^{d-2}\theta d\theta
\end{align*}
Therefore:
\begin{align*}
\frac{A(K_{ij})}{A(S^{d-1})}&\leqslant \frac{2\Gamma(\frac{d}{2})}{\sqrt{\pi}\Gamma(\frac{d-1}{2})}\int_{\arccos(t)}^{\frac{\pi}{2}}\sin^{d-2}\theta d\theta\\
&\leqslant\frac{2\Gamma(\frac{d}{2})}{\sqrt{\pi}\Gamma(\frac{d-1}{2})}(\pi/2-\arccos(t))\\
&=\frac{2\Gamma(\frac{d}{2})}{\sqrt{\pi}\Gamma(\frac{d-1}{2})}\arcsin(t)
\end{align*}

Next we will estimate its upper bound. From the graph of function, we can obtain the upper bound of $\arcsin(t)\le \frac{\pi}{2}t, ~\forall t\in[0,1]$. The upper bound of the other part containing Gamma function is given in the following lemma.

\begin{Lemma}
	\[\frac{\Gamma(\frac{d}{2})}{\Gamma(\frac{d-1}{2})}< \frac{d}{2}\quad \forall d\in \mathbb{N}\]
\end{Lemma}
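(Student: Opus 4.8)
The plan is to reduce the inequality to a one-line consequence of the logarithmic convexity of the Gamma function. Write $x=\tfrac{d-1}{2}$, so that $\tfrac{d}{2}=x+\tfrac12$ and the claim becomes $\Gamma(x+\tfrac12)<(x+\tfrac12)\,\Gamma(x)$. For $d=1$ we have $x=0$, $\Gamma(0)=\infty$, and the left-hand ratio is $0<\tfrac12$, so from now on I would assume $d\ge 2$, i.e. $x>0$.

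Recall that $t\mapsto\log\Gamma(t)$ is convex on $(0,\infty)$ (Bohr--Mollerup). Applying convexity at the midpoint of the interval $[x,\,x+1]$ gives
\[
\log\Gamma\!\left(x+\tfrac12\right)\;\le\;\tfrac12\log\Gamma(x)+\tfrac12\log\Gamma(x+1),
\]
hence $\Gamma(x+\tfrac12)\le\sqrt{\Gamma(x)\,\Gamma(x+1)}$. Using the functional equation $\Gamma(x+1)=x\,\Gamma(x)$, the right-hand side equals $\sqrt{x}\,\Gamma(x)$, so after dividing by $\Gamma(x)>0$,
\[
\frac{\Gamma(d/2)}{\Gamma((d-1)/2)}\;\le\;\sqrt{x}\;=\;\sqrt{\frac{d-1}{2}}.
\]
It then remains to check $\sqrt{(d-1)/2}<d/2$, which after squaring reads $2(d-1)<d^{2}$, i.e. $(d-1)^{2}+1>0$, true for every real $d$. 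Chaining the two displays closes the lemma, and the final bound is in fact strict already at this last step.

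I do not expect a genuine obstacle here: the only real decision is to invoke log-convexity of $\Gamma$ rather than grinding through a parity case split on $d$ (using the closed forms of $\Gamma$ at integers and half-integers) followed by an induction. A fully self-contained fallback, if one prefers not to cite Bohr--Mollerup, is to set $r_d:=\Gamma(d/2)/\Gamma((d-1)/2)$, observe $r_d\,r_{d+1}=\tfrac{d-1}{2}$ from the functional equation, and run an induction on $d$; but the log-convexity route is shorter and avoids those computations, so I would present that as the main proof.
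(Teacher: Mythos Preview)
Your proof is correct and takes a genuinely different route from the paper. The paper argues by a parity split: it verifies $d\le 5$ by hand, and for $d\ge 6$ uses only the monotonicity of $\Gamma$ on $[2,\infty)$ together with the functional equation. When $d$ is odd it bounds $\Gamma(d/2)\le\Gamma((d+1)/2)$ to get the ratio $\le\tfrac{d-1}{2}$; when $d$ is even it bounds $\Gamma((d-1)/2)\ge\Gamma(d/2-1)$ to get the ratio $\le\tfrac{d}{2}-1$. Your log-convexity argument (essentially Gautschi's inequality) is cleaner: it yields the sharper intermediate bound $\sqrt{(d-1)/2}$, handles all $d\ge 2$ uniformly with no parity distinction or small-case checks, and is the natural tool for ratios $\Gamma(x+\tfrac12)/\Gamma(x)$. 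The trade-off is that the paper's proof is entirely elementary---it needs nothing beyond monotonicity and $\Gamma(x+1)=x\Gamma(x)$---whereas yours invokes Bohr--Mollerup (or an equivalent convexity fact) as a black box. Your anticipated ``fallback'' via a parity case split is in fact exactly what the paper does, modulo replacing induction by monotonicity.
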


\begin{proof}
	When $d\leqslant5$, the lemma can be easily verified using exact values of the Gamma function. Suppose $d\geqslant6$. Note that Gamma function $\Gamma(x)$ is monotonically increasing when $x\ge 2$
	
	If d is odd, then $\frac{d-1}{2}$ is an integer, and $\frac{d}{2}>2$
	\[\frac{\Gamma(\frac{d}{2})}{\Gamma(\frac{d-1}{2})}\leqslant\frac{\Gamma(\frac{d+1}{2})}{\Gamma(\frac{d-1}{2})}=\frac{d-1}{2}<\frac{d}{2}\]
	
	In the equation above, we use the property of Gamma function: $\Gamma(x+1)=x\Gamma(x)$ if $x$ is an integer.
	
	Similarly, if d is even, then $\frac{d}{2}$ is an integer, and $\frac{d}{2}-1\geqslant2$
	\[\frac{\Gamma(\frac{d}{2})}{\Gamma(\frac{d-1}{2})}\leqslant\frac{\Gamma(\frac{d}{2})}{\Gamma(\frac{d}{2}-1)}=\frac{d}{2}-1<\frac{d}{2} \]
\end{proof}

Therefore:
\[\frac{A(K_{ij})}{A(S^{d-1})}<\frac{2}{\sqrt{\pi}}\frac{d}{2}\frac{\pi}{2}t=\frac{\sqrt{\pi}d}{2}t\]

Finally, we are able to calculate the surface area of the left side of (\ref{eq:kij}):
\begin{align*}
A\left(\bigcup\limits_{1\leqslant i<j\leqslant M}K_{ij}\right) &\leqslant \sum_{1\leqslant i<j\leqslant M}A(K_{ij})\\
&<\frac{\sqrt{\pi}d}{2}t A(S^{d-1})\cdot\binom{M}{2}\\
&\leqslant A(S^{d-1})
\end{align*}
Here, we use the assumption that:
\[t = \frac{D}{L}\leqslant \left(\frac{M(M-1)\sqrt{\pi}}{4}d\right)^{-1}\]
Thus, the correctness of (\ref{eq:kij}) is obvious.
\end{proof}
\section{Proof of Proposition 1}
\begin{proof}
	Following proof of Theorem 1 and 2, renumber the $M=kl$ subsets and denote them $\Gamma_{m} = S_{i,j},~m\in [M],~(i,j)\in [k]\times[l]$ for notation convenience. We consider the forward Euler discretization of the diffusion process:
	\[x_i^{k+1} = x_i^k - \gamma\sum_{j=1}^N w_{ij}(x_i^k-x_j^k),~\forall i\in [N]\]
	
	We will first prove that with each update scheme, the new region $\Gamma_i^{k+1} \subseteq \Gamma_{i}^k$, thus $L(t)$ in monotonic non-decreasing. For a specific data point $x_{i}$, suppose $x_{i}\in \Gamma_{m}$. Since $\sum_{j=1}^N w_{ij}=1$ by normalization, the diffusion is:
	\[x_i^{k+1} =(1-\gamma)x_i^k+\gamma\sum_{j=1}^N w_{ij} x_j^k\]
	
	By assumption, points in each subset $\Gamma_{m}$ forms a connected component in graph $G$. Thus, the nearest $n_{\mathrm{top}}$ points of $x_i$ are all from $\Gamma_{m}$, i.e. $w_{ij}>0$ only if $x_j \in \Gamma_{m}$ \footnote{Reverse not necessarily true. $x_j \in \Gamma_{m}$ does not necessarily imply $w_{ij}>0$.}. Use the condition $\Gamma_{m}$ is convex, and $\sum_{j=1}^N w_{ij}=1$ again, the weighted sum $\sum_{j=1}^N w_{ij}x_j$ also lies in the convex region $\Gamma_{m}$. Finally, as $(1-\gamma)+\gamma=1$, we get $x_i^{k+1} \in \Gamma_{m}$. In other words, after batch diffusion, the new convex region $\Gamma_i^{k+1} \subseteq \Gamma_{i}^k$.
	
	Then we will show the upper bound of diameters $D(t)$ decreases exponentially to 0 with $t$. We may write diffusion mechanism in the vectorized form:
	\begin{equation}
		\frac{dX(t)}{dt}+\gamma (\Lambda-W)X(t)=0, \quad X(0)=X.
		\label{eq:continuous}
	\end{equation}
	where $X(t)=[x_1(t),\ldots,x_N(t)]$, $X=[x_1,\ldots,x_N]$, $\Lambda = \mathrm{diag}(d_i)$ with $d_i=\sum_{j=1}^Nw_{ij}$ for all $i=1,2,\ldots,N$, $W(i,j)=w_{ij}$. $L = \Lambda-W$ is called graph Laplacian\footnote{Abuse of notation with the lower bound of distances $L$}. Denote the eigenvalues and corresponding eigenvectors of $L$ as $\lambda_i$ and $\boldsymbol{v}_i,~ i\in [N]$. Since $L$ is a positive semi-definite symmetric matrix, $\lambda_i$ are real and non-negative. Suppose $0\leqslant\lambda_1\leqslant\lambda_2\leqslant\cdots\leqslant\lambda_N$. In spectral clustering literatures~\cite{chung1997spectral}, the reliance of graph diffusion on $L$ is well studied. A well-known result is: the multiplicity of 0 eigenvalue of the Laplacian equals the number of connected components of graph G. 
	
	We start with the simple case, where the graph $G$ only admits one connected component, i.e. when all data points belong to the same subclass. In this case, $\lambda_1=0$ with $\boldsymbol{v}_1= \boldsymbol{1}$ its correspoding eigenvector, and $\lambda_i>0,~\forall i\neq 1$. We will prove all points converge to their central. The spectral solution of \eqref{eq:continuous} is~\cite{chung1997spectral}:
	\begin{equation}
		X(t)=\sum_{i=1}^N\frac{X(0)\cdot \boldsymbol{v}_i}{|\boldsymbol{v}_i|^2}e^{-\gamma \lambda_{i} t}\boldsymbol{v}_i
	\end{equation}
	
	As $t\rightarrow\infty$, $e^{-\gamma \lambda_{i} t}\rightarrow 0$ if $\lambda_{i}>0$. Define 
	\[m_c= \frac{x_1+x_2+\cdots +x_N}{N}\] as the central point of data points. Then
	\begin{align*}
	\lim_{t\rightarrow \infty} X(t) &= \lim_{t\rightarrow \infty} \sum_{i=1}^N\frac{X(0)\cdot \boldsymbol{v}_i}{|\boldsymbol{v}_i|^2}e^{-\gamma \lambda_{i} t}\boldsymbol{v}_i\\
	&=\lim_{t\rightarrow \infty} \frac{X(0)\cdot \boldsymbol{v}_1}{|\boldsymbol{v}_1|^2}e^{-\gamma \lambda_{1} t}\boldsymbol{v}_1\\
	&= \frac{X(0)\cdot \boldsymbol{1}}{N} \cdot \boldsymbol{1}\\
	&= [m_c,m_c,\cdots,m_c]
	\end{align*}

	The result above implies that all data points eventually lie in the same position, which is their central point, as time $t$ approaches infinity. Moreover, it is obvious from the equation that the growth rate is exponential. Thus, with the evolution of our diffusion mechanism, the diameter $D(t)$ decreases exponentially to 0.
	
	For the more general case where there are $M$ connected components in the graph, the proof is identical by using the fact that the multiplicity of 0 eigenvalue of the Laplacian equals the number of connected components of graph G. For each connected component, its points will converge to a specific central, and the diameter of each subset
	\[\lim_{t\rightarrow \infty}\mathrm{diam}(\Gamma_m(t))=0,~\forall m \in [M]\]
	Thus the upper bound of diameters $D(t)$ decreases exponentially to 0.	Combining the results $L(t)$ is non-decreasing and $D(t)\rightarrow 0$ exponentially, we get \[\lim_{t\rightarrow \infty}\frac{L(t)}{D(t)}=\infty.\]
	the growth rate is exponential.
\end{proof}
\section{Experiment Details and Results}

\subsection{Synthetic Data}
\label{section:experiment_synthetic}
\subsubsection{Dataset}
\textbf{XOR}\quad Uniformly collect 100 points each in four circles centered at (0,0), (0,2), (2,0), (2,2), respectively. Circles are with radius 0.75.\\
\textbf{Moon}\quad Uniformly collect 500 points each in two arcs of semi-circle: one is the upper arc of a circle centered at (0, 0) with radius 1, the other is the lower arc of a circle centered at (1, 0.5) also with radius 1. Points are added with a standard gaussian noise multiplied by 0.05.\\
\textbf{Circle}\quad Uniformly collect 500 points each in two circumference of circles: both are centered at (0, 0), one has radius 1 and the other has radius 2. Points are added with a standard gaussian noise multiplied by 0.05.\\
\textbf{Spiral}\quad Uniformly collect 500 points each in two spirals: both are parametrized by $r=a+b\theta$. One has $a=b=1$ and the other has $a=b=-1$. Points are added with a standard gaussian noise multiplied by 0.1.

\subsubsection{Network Structure}
\begin{align*}
	x&=x + \text{FC2 ( ReLU ( FC1 ( $x$ ) ) )}\\
	x&=\text{Diffusion}(x)  \qquad \text{for $r$ times}\\
	y&=\text{FC3 ( $x$ )}
\end{align*}
	
We use one residual block, i.e. $s=1$. All the fully connected layers are of size 2$\times$2 with bias (that is why we have totally 3$\times$2$\times$3=18 parameters). As for the diffusion layer, we use a fixed step size $\gamma$, and iterate for $r$ times.

\subsubsection{Parameters}
\begin{table}[hbtp]
    \caption{Parameters for synthetic data}
	\centering
	\begin{tabular}{lcccc}
		\toprule
		       & $n_{\mathrm{top}}$ & $\sigma$ & $\gamma$ & $r$ \\ \midrule
		XOR    & 20                 & 0.5      & 1.0      & /   \\
		Moon   & 25                 & 0.5      & 1.0      & 60  \\
		Circle & 50                 & 0.5      & 1.0      & 200 \\
		Spiral & 25                 & 0.5      & 1.0      & 900 \\ \bottomrule
	\end{tabular}
\end{table}

For the classification tasks, our optimizer is SGD with lr$=1.0$, momentum$=0.9$ and weight\_decay$=5e-4$. For spiral dataset, we adjust lr$=0.8$.

\subsubsection{Additional Results}
\label{section:additional_results}
We provide the figures describing the evolution of features with or without diffusion in residual network on the other two synthetic datasets in Figure~\ref{fig:two_moon} and Figure~\ref{fig:two_spiral}.
\begin{figure}[hbtp]
	\centering
	\subfloat[raw]{\raisebox{1.6ex}{\includegraphics[width=0.45\linewidth]{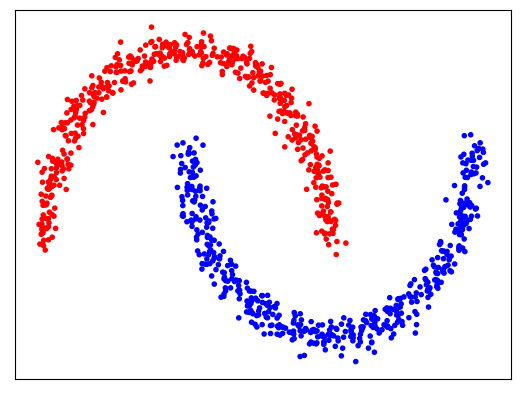}}}
	\hfill
	\subfloat[accuracy]{\includegraphics[width=0.5\linewidth]{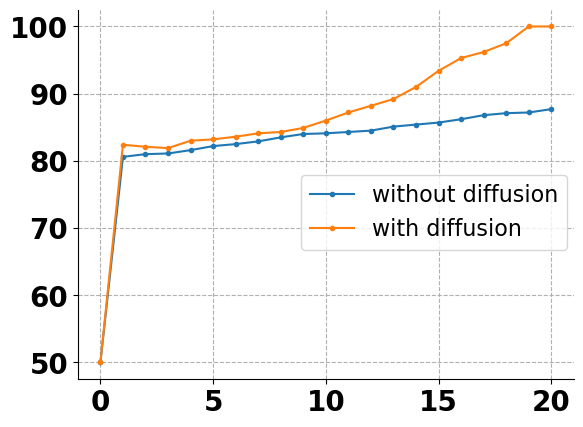}}
	
	\subfloat[w/o, epoch=0]{\includegraphics[width=0.33\linewidth]{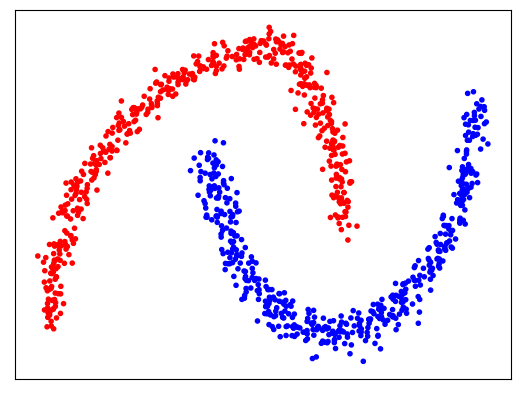}}
	\hfill
	\subfloat[w/o, epoch=10]{\includegraphics[width=0.33\linewidth]{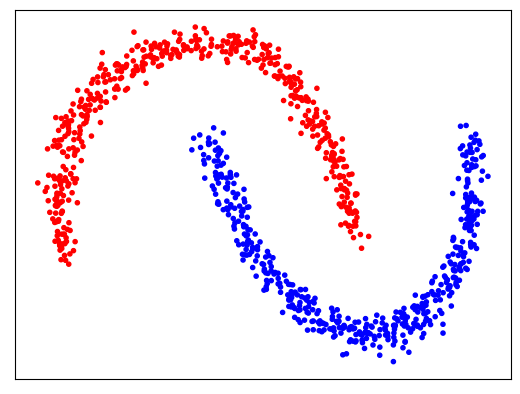}}
	\hfill
	\subfloat[w/o, epoch=20]{\includegraphics[width=0.33\linewidth]{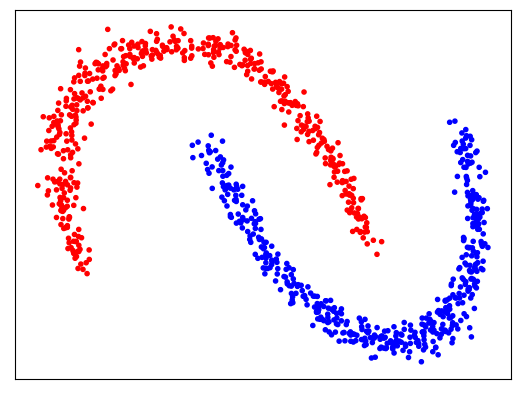}}
	
	\subfloat[w, epoch=0]{\includegraphics[width=0.33\linewidth]{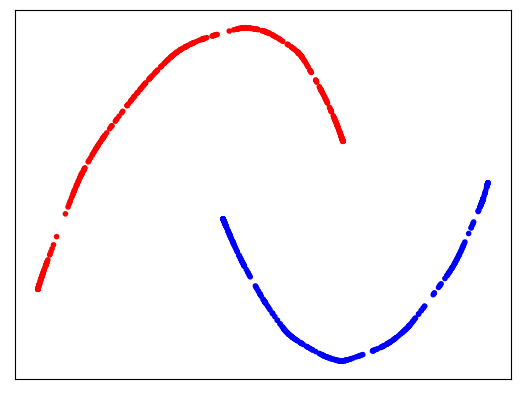}}
	\hfill
	\subfloat[w, epoch=10]{\includegraphics[width=0.33\linewidth]{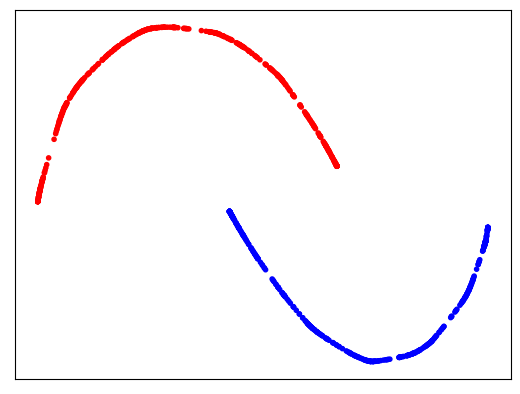}}
	\hfill
	\subfloat[w, epoch=20]{\includegraphics[width=0.33\linewidth]{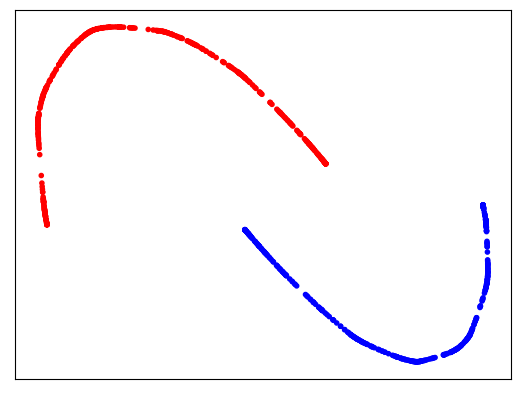}}
	\caption{ResNet and DiffResNet on moon dataset, figures are arranged similar to Fig.\ref{fig:two_circle}}
	\label{fig:two_moon}
\end{figure}

\begin{figure}[hbtp]
	\centering
	\subfloat[raw]{\raisebox{1.6ex}{\includegraphics[width=0.45\linewidth]{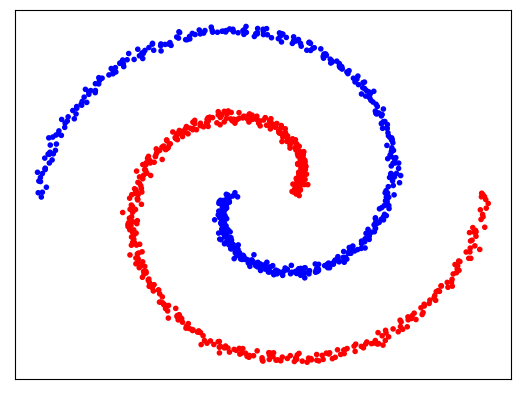}}}
	\hfill
	\subfloat[accuracy]{\includegraphics[width=0.5\linewidth]{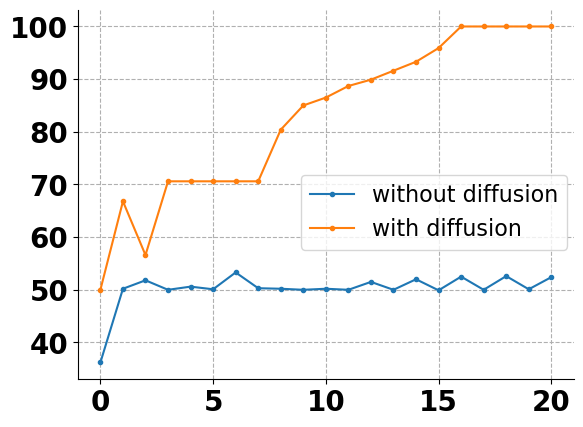}}
	
	\subfloat[w/o, epoch=0]{\includegraphics[width=0.33\linewidth]{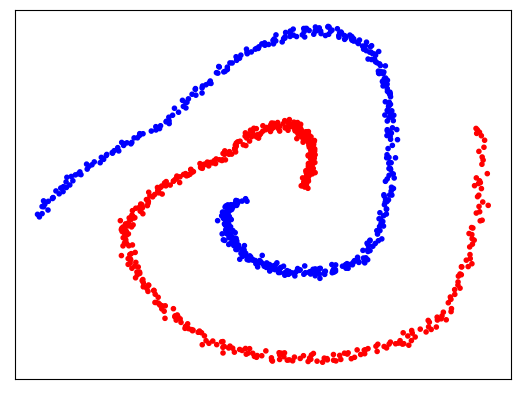}}
	\hfill
	\subfloat[w/o, epoch=10]{\includegraphics[width=0.33\linewidth]{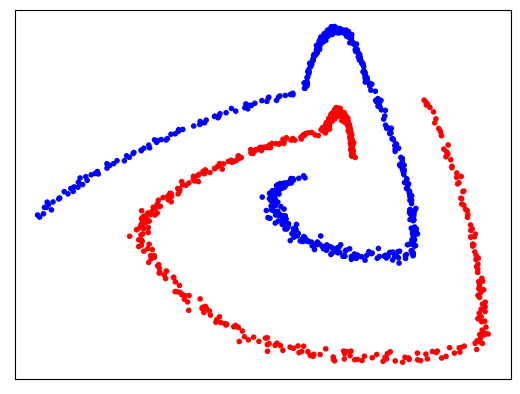}}
	\hfill
	\subfloat[w/o, epoch=20]{\includegraphics[width=0.33\linewidth]{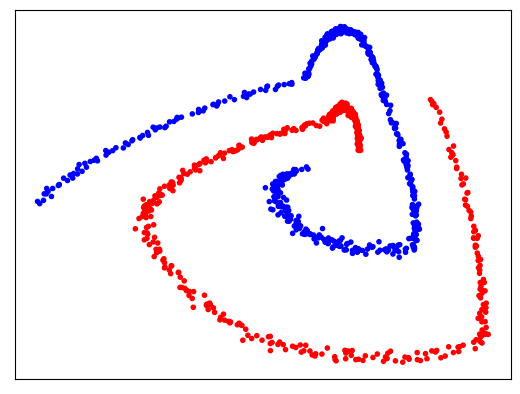}}
	
	\subfloat[w, epoch=0]{\includegraphics[width=0.33\linewidth]{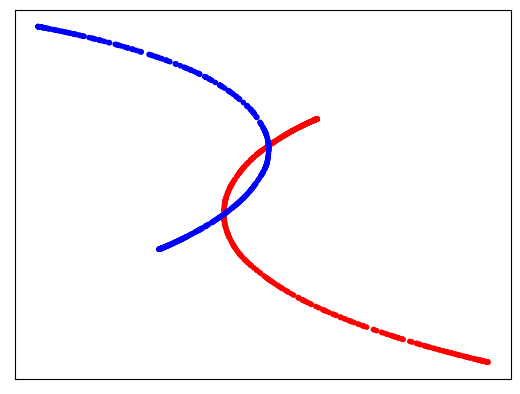}}
	\hfill
	\subfloat[w, epoch=10]{\includegraphics[width=0.33\linewidth]{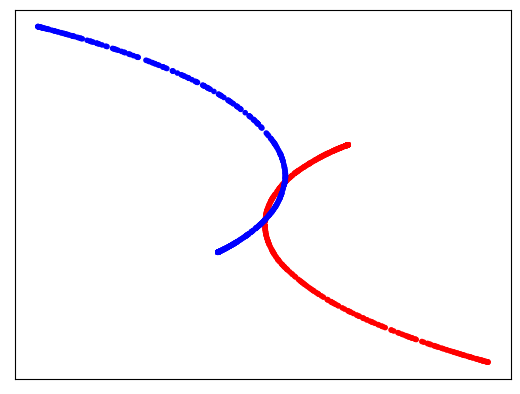}}
	\hfill
	\subfloat[w, epoch=20]{\includegraphics[width=0.33\linewidth]{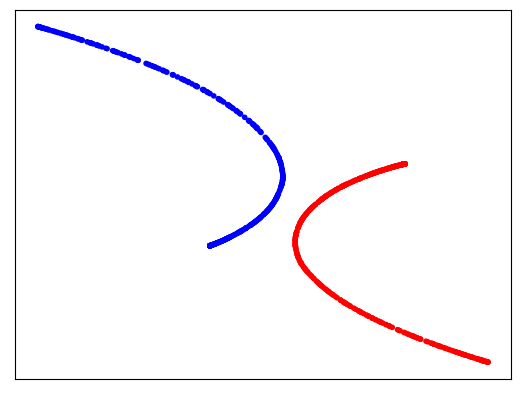}}
	\caption{ResNet and Diff-ResNet on spiral dataset, figures are arranged similar to Fig.\ref{fig:two_circle}}
	\label{fig:two_spiral}
\end{figure}

\subsection{Graph Learning}
\label{section:experiment_graph}
\subsubsection{Dataset}
\label{section:experiment_graph_dataset}
Here we give the statistics of each dataset. For each randomly chosen split, we pick 20 labeled points for training, and 30 points for validation in each class. All of the rest points are used as the test set. For all datasets, we treat the graph as undirected and only consider the largest connected component.
\begin{table}[hbtp]
	\caption{Graph Dataset Statistics.}
	\centering
	\begin{tabular}{lccccc}
		\toprule
		Dataset & Node & Edge & Class & Feature Dim & Label Rate \\ \midrule
		Cora    & 2485  & 5069  & 7       & 1433        & 0.057     \\
		Citeseer    & 2120  & 3679  & 6       & 3703        & 0.056     \\
		Pubmed    & 19717  & 44324  & 3       & 500        & 0.003     \\ \bottomrule
	\end{tabular}
\end{table}

\subsubsection{Preprocessing}
We follow the normalization technique in GCN~\cite{kipf2017semi}: the adjacent matrix is first added with a self-loop, and then symmetrically normalized. The feature vectors are row normalized.

\subsubsection{Network Structure}
Since we observe severe overfitting problem in graph learning, we delete FC2 to reduce the number of parameters, and apply dropout on the feature vectors after each round of diffusion.The network structure is: 
\begin{align*}
	x&=x + \text{ReLU ( FC1 ( $x$ ) )}\\
	x&=\text{Dropout ( Diffusion ( $x$ ) )}  \qquad \text{for $r$ times}\\
	y&=\text{FC3 ( $x$ )}
\end{align*}

The fully connected layers have input and output dimension the same as feature dimension. The new structure introduces a new parameter compared to toy examples: the dropout rate. But parameter $n_{\mathrm{top}}$ and $\sigma$ is unnecessary in graph learning.

\subsubsection{Parameters}
Parameters are chosen based on the accuracy on the validation set.
\begin{table}[hbtp]
    \caption{Parameters for graph learning}
	\centering
	\begin{tabular}{lccc}
		\toprule
		         & $\gamma$ & $r$  & dropout rate \\ \midrule
		Cora     & 20       & 0.25 & 0.25       \\
		Citeseer & 20       & 0.2  & 0.35        \\
		Pubmed   & 10       & 0.4  & 0.3        \\ \bottomrule
	\end{tabular}
\end{table}

\subsection{Few-shot Learning}
\label{section:experiment_fewshot}
\subsubsection{Preprocessing}
Following previous works\cite{wang2019simpleshot, ziko2020laplacian, liu2019prototype}, three additional feature transformation skills are used to enhance the performance.

\noindent\textbf{(1) Centering and Normalization} $$x=x-\bar{x} \quad \text{then} \quad x=\frac{x}{\|x\|_2},~\forall x\in \mathbb{X}_s\cup\mathbb{X}_q$$
$\bar{x}$ is the base class average. 

\noindent\textbf{(2) Cross-Domain Shift} \(x=x+\Delta,~\forall x\in \mathbb{X}_q\), where $$\Delta=\frac{1}{|\mathbb{X}_s|}\sum_{\mathbb{X}_s}x-\frac{1}{|\mathbb{X}_q|}\sum_{\mathbb{X}_q}x$$
is the difference between the mean of features within the support set and the mean of features within the query set. 

we apply the first two transformations to all extracted features.

\noindent\textbf{(3) Prototype Rectification}
$$\tilde{m_c}=\frac{1}{|\mathbb{X}_s^c|+|\mathbb{X}_q^c|}\sum_{x\in\{\mathbb{X}_s^c,\mathbb{X}_q^c\}}\frac{\exp(\cos(x,m_c))}{\sum_{x\in\{\mathbb{X}_s^c,\mathbb{X}_q^c\}}\exp(\cos(x,m_c))}x$$
Here $m_c=\frac{1}{|\mathbb{X}_s^c|}\sum_{\mathbb{X}_s^c}x$ is the mean of features within the support set of class $c$. $\mathbb{X}_s^c$ is the support set with label $c$. $\mathbb{X}_q^c$ is a pre-classified set based on nearest neighbors.

Prototype rectification is only applicable to classification methods that are based on prototype, and cannot be directly applied to our Cross-Entropy loss. Nonetheless, we observe in the ablation study~\ref{table:ablation} that in 5-shot tasks, merely nearest prototype classification can already achieve very competitive results, indicating the effectiveness of prototype in 5-shot tasks. So we mimic the first loss term in~\cite{ziko2020laplacian} and propose the prototypical loss below.
$$\text{Prototypical Loss}=\sum_{x_i \in \mathbb{X}_q} \sum_{c=1}^C f(x_i)_c d(x_i-\tilde{m_c}) $$

The final loss is a weighted sum of Cross-Entropy Loss and Prototypical Loss, with another parameter $\alpha$ before Prototypical Loss. 

\subsubsection{Network Structure}
\label{section:experiment_fewshot_network}
The structure is identical to that in the toy examples, just changing the input and output dimension of each FC layer from 2 to $M$, where $M$ is the dimension of embedded features.

\subsubsection{Parameters}
In the few-shot setting, the meaning of $\sigma$ is slightly different: weight is now calculated by $\tilde w_{ij}= \exp(-\|x_i-x_j\|_2^2/\sigma(x_i)^2)$, where $\sigma(x_i) = k$ means $\sigma$ is chosen to be the $k$-th closest distance from a specific point $x_i$ , so it varies with points.

In ablation study, $n_{\mathrm{top}}=8$, $\sigma=4$. The diffusion step size $\gamma$ is fixed to be 0.5 for all tasks. The diffusion step number $r=10$ for 1-shot learning, $r=5$ for 5-shot learning. $\lambda=0.5$, $\mu=0.01$.

In experiments with additional tricks, we also choose $n_{\mathrm{top}}=8$, $\sigma=4$. The diffusion step size $\gamma$ is fixed to be 0.5 for all tasks. The diffusion step number $r$ varies with tasks: for 1-shot learning, $r=6$ for \textit{mini}ImageNet and \textit{tiered}ImageNet, $r=7$ for CUB; for 5-shot learning, $r=3$ for all datasets and backbones. In addition, the weight before Prototypical Loss is $\alpha=0$ for 1-shot tasks, and $\alpha=0.5$ for 5-shot tasks.

The optimizer is SGD with initial learning rate$=0.1$, momentum$=0.9$ and weight\_decay$=1e$-4. We train $T=100$ epochs. We use a multi-step scheduler, which decays the learning rate by 0.1 at $0.5T$ and $0.75T$.

\end{document}